\documentclass{article}

\PassOptionsToPackage{dvipsnames,table}{xcolor}

\usepackage{iclr2027_conference,times}

\usepackage{amsmath,amsfonts,bm}

\def\eqref#1{equation~\ref{#1}}

\def\1{\bm{1}}

\DeclareMathAlphabet{\mathsfit}{\encodingdefault}{\sfdefault}{m}{sl}
\SetMathAlphabet{\mathsfit}{bold}{\encodingdefault}{\sfdefault}{bx}{n}

\usepackage[utf8]{inputenc}
\usepackage[T1]{fontenc}

\usepackage{amssymb}
\usepackage{mathtools}
\usepackage{amsthm}

\usepackage{graphicx}
\usepackage{booktabs}
\usepackage{multirow}
\usepackage{subcaption}
\usepackage{wrapfig}

\usepackage{nicefrac}
\usepackage{microtype}
\usepackage{csquotes}
\usepackage{enumitem}
\usepackage{xspace,lineno}

\usepackage{algorithm}
\usepackage{algorithmic,lipsum}

\usepackage[textsize=tiny]{todonotes}

\usepackage{xcolor}
\usepackage[
  colorlinks=true,
  linkcolor=purple,
  citecolor=blue,
  urlcolor=BrickRed
]{hyperref}
\usepackage{cleveref}

\newtheorem{theorem}{Theorem}

\theoremstyle{definition}

\theoremstyle{remark}

\newif\ifshowrevisions
\showrevisionstrue    

\newcommand{\std}[1]{{\,\scriptsize$\pm$#1}}

\title{Heat Field Signatures:\\ From Point Clouds to Smooth Geometry}

\author{Yuanqing Wang \\ Dept. of Math. Sciences \\ University of Texas at Dallas \\ Richardson, TX, USA \\ \And 
Yapeng Tian \\ Dept. of Computer Science \\ University of Texas at Dallas \\ Richardson, TX, USA \\ \And 
Baris Coskunuzer \\ Dept. of Math. Sciences \\ University of Texas at Dallas \\ Richardson, TX, USA }

\iclrfinalcopy 

\begin{document}

\maketitle
\vspace{-.2in}

\begin{abstract}
Bringing multiscale geometric analysis directly to irregular point clouds
remains difficult: quantities such as local dimension, anisotropy, density
variation, and geometric transitions are typically estimated through explicit
neighborhood, manifold, or graph constructions, or left for neural networks to
infer from coordinates. We introduce \emph{Heat Field Signatures} (HFS), which
lift a point cloud to a multiscale family of smooth ambient heat fields,
providing a direct interface from discrete samples to geometric analysis.

From this field, HFS computes closed-form global and local signatures directly
from pairwise distances, capturing heat concentration, intrinsic dimension,
anisotropy, and scale transitions. We further introduce the \emph{Heat
Dimension Spectrum} (HDS), a compact summary of multiscale geometric
composition. HFS can be used as a closed-form descriptor, a lightweight learned
representation, or a geometric feature channel for neural point-cloud models.

Across synthetic and real-world benchmarks spanning subcellular, neuronal,
tree, and protein data, HFS outperforms strong point-cloud and
multiparameter-persistence baselines while substantially reducing end-to-end
cost. On SCOP protein-fold classification, HFS improves over the strongest
deep baseline by nearly $24$ percentage points using coordinates alone, while
standalone HFS representations are exactly rotation-invariant by construction.
More broadly, HFS turns a classical heat field into a practical interface for
multiscale geometric analysis in modern point-cloud learning.
\end{abstract}
\vspace{-.2in}

\section{Introduction}
Modern learning has repeatedly advanced when a branch of mathematics acquires
a practical interface to finite data. Kernels expose functional geometry
through inner products, persistent homology turns topology into stable
summaries, and graph Laplacians make diffusion and spectral structure
computable from samples
~\citep{scholkopf2002learning,edelsbrunner2002topological,coifman2006diffusion}.

For irregular point clouds, however, there is no comparably standard interface
to the rich toolbox of \emph{differential geometry}. Quantities such as local
dimension, anisotropy, curvature, density variation, and geometric transitions
are often precisely what distinguishes classes, yet point-cloud networks
typically learn such structure only implicitly from coordinates and
neighborhoods~\citep{qi2017pointnet,wang2019dynamic,zhao2021point}.

We propose a simple interface: \emph{lift the point cloud to a smooth,
multiscale geometric object}. Formally, \emph{Heat Field Signatures} (HFS)
maps $X=\{x_1,\ldots,x_m\}\subset\mathbb R^n$ to the multiscale ambient heat
field
\[
u_X(y,t)
=
(4\pi t)^{-n/2}
\sum_{i=1}^{m}
\exp\!\left(-\frac{\|y-x_i\|^2}{4t}\right),
\]
whose graph is a smooth hypersurface in $\mathbb R^{n+1}$ for every diffusion
scale $t$. A discrete, irregular sample is thereby replaced by a family of
smooth geometric objects on which derivatives, Hessians, energies, and other
geometric quantities are directly available. The point is not merely to smooth
the data: the heat field provides an \emph{interface through which geometric
analysis can be brought systematically into point-cloud learning}.

This interface is particularly natural for data whose identity is not encoded
by a rigid global silhouette. Neurons, protein structures, spatial processes,
and LiDAR trees are characterized by how geometry changes across space and
scale: filaments meet at junctions, dense regions transition to sparse ones,
and structures of different local dimension coexist. Diffusion time provides
the scale axis needed to resolve this organization, while the smooth heat
landscape makes it analytically accessible. Unlike spectral heat methods that
diffuse \emph{on} a graph or manifold, HFS analyzes the \emph{ambient} field
generated directly by the samples, requiring no mesh, graph Laplacian, normal
estimation, or spectral decomposition.

In this work, we instantiate this broader interface with a small set of
complementary geometric measurements. Global heat signatures capture
concentration, scale response, and roughness. Locally, the \emph{heat
dimension} estimates effective intrinsic dimension, the log-Hessian spectrum
captures tangent--normal structure and anisotropy, and scale-transition
responses identify junctions, boundaries, and mixed-dimensional regions. We
further introduce the \emph{Heat Dimension Spectrum} (HDS), which summarizes
the multiscale distribution of point-, filament-, surface-, and volume-like
regions and their interaction with sampling density. These are not intended
to exhaust the geometry available from the heat field, but to demonstrate the
range of information exposed by the interface.

The resulting representation can be used as a closed-form descriptor, equipped
with a lightweight learned pooling head, or supplied as geometric features to
a point-cloud network; closed-form HFS descriptors are already highly
competitive, while learned pooling and backbone integration provide
additional flexibility. Across synthetic density/topology benchmarks and real
  subcellular, neuronal, tree, and protein data, an HFS variant achieves the
  best performance on all seven benchmarks, while providing substantially
  lower computational cost and exact rotation invariance by construction. On
SCOP protein-fold classification, HFS improves over the strongest deep
point-cloud baseline by nearly $24$ percentage points using coordinates alone.
These results suggest that explicitly exposing geometric structure can be
especially effective when intrinsic multiscale geometry itself carries the
label.

Our contributions are:
\begin{itemize}[nosep,leftmargin=*]
    \item We introduce \textbf{HFS as a geometric interface for point clouds}:
    a discrete cloud is lifted to a multiscale family of smooth heat
    landscapes, making tools from differential geometry directly applicable
    to finite, irregular data.
    \item We instantiate this interface with \textbf{global and local
    heat-geometric signatures} capturing concentration, intrinsic dimension,
    anisotropy, and scale transitions, together with the \textbf{Heat Dimension
    Spectrum (HDS)} for shape-level aggregation.
    \item We develop both \textbf{closed-form and learned HFS
    representations}, retaining geometric invariance while allowing
    task-specific learning and integration with standard point-cloud networks.
    \item Across synthetic and real-world benchmarks, HFS consistently
    outperforms strong point-cloud and multiparameter-persistence baselines
    while providing exact rotation invariance and substantially lower
    computational cost.
\end{itemize}

\section{Related Work}
\label{sec:related}

\noindent \textbf{Point-cloud learning.} \quad
PointNet~\citep{qi2017pointnet} applied shared MLPs and global max-pooling
to individual points; PointNet++~\citep{qi2017pointnet2} added hierarchical
local grouping via farthest-point sampling. DGCNN~\citep{wang2019dynamic}
built dynamic $k$-NN graphs with edge convolutions, and attention-based
architectures such as Point Transformer~\citep{zhao2021point} and
PCT~\citep{guo2021pct} apply self-attention to local neighborhoods; masked
autoencoders~\citep{pang2022masked} further improve representations through
self-supervised pretraining. Density-adaptive variants such as
DRINet~\citep{ye2021drinet} and domain-adaptation methods such as
PointDAN~\citep{qin2019pointdan} address nonuniform sampling by reweighting
neighborhood aggregation, and self-supervised approaches~\citep{huang2021spatio}
learn density invariance through augmentation -- but all of these methods
learn geometric structure implicitly from coordinates and learned features.
In contrast, our approach makes local geometry explicit: rather than
learning density robustness from augmented data, logarithmic scale
derivatives of the heat field are approximately invariant to multiplicative
density variation by construction (Section~\ref{sec:method}), complementing
learned representations with theoretically grounded geometric variables.

\noindent \textbf{Handcrafted descriptors and dimension estimation.} \quad
Before deep learning, point-cloud recognition relied on handcrafted local
descriptors: FPFH~\citep{rusu2009fast} and PFH~\citep{rusu2008aligning}
encode surface geometry through histograms of angular relationships between
point normals, while SHOT~\citep{tombari2010unique} and spin
images~\citep{johnson1999using} combine shape and keypoint-centered
histograms. These operate at a single fixed scale and assume well-oriented
normals, limiting robustness to noisy or heterogeneous sampling.
Separately, intrinsic-dimension estimators -- correlation
dimension~\citep{grassberger1983measuring}, local PCA~\citep{camastra2016intrinsic},
and TWO-NN~\citep{facco2017estimating} -- produce a single scalar dimension
estimate at a fixed scale from pairwise distances. Our heat-geometric
quantities differ from both lines of work in being multiscale by
construction and requiring no normal estimation. Unlike prior dimension
estimators, our heat dimension $d_{\mathrm{heat}}(x,t)$ is a continuous,
scale-dependent profile rather than a single scalar, and is one component
of a joint geometric-type descriptor (with codimension and scale-transition
signals) whose distribution we aggregate into the Heat Dimension Spectrum
-- a shape-level summary of dimensional composition across scale that
existing estimators do not provide.

\noindent \textbf{Heat kernels and spectral methods.} \quad 
The Heat Kernel Signature (HKS)~\citep{sun2009concise} uses the diagonal of
the heat kernel on a mesh as a multiscale point descriptor, exploiting the
connection between heat diffusion and Laplace-Beltrami eigenfunctions; the
Wave Kernel Signature (WKS)~\citep{aubry2011wave} replaces diffusion with
wave propagation for improved frequency selectivity. Both require a mesh
or graph Laplacian on compact surfaces without boundary. Diffusion
maps~\citep{coifman2006diffusion} use the heat kernel for geometry-preserving
embeddings, but target global dimensionality reduction rather than local
geometric type. We differ in three ways: we work directly on unstructured
point clouds without mesh or graph construction; we use the differential
geometry of the ambient heat field rather than spectral decomposition; and
we extract local quantities -- dimension, codimension, and scale
transitions -- rather than global embeddings or signatures.

\noindent \textbf{Topological and geometric data analysis.} \quad
Persistent homology~\citep{edelsbrunner2002topological} encodes multiscale
topological features of point clouds as persistence diagrams; extensions
such as the multicover bifiltration~\citep{edelsbrunner2021multi} incorporate
density information into topological descriptors. The resulting
density--distance bifiltration has also motivated synthetic benchmarks for
evaluating joint sensitivity to scale and
density~\citep{carriere2020multiparameter,kerber2024graphcode}, which we
adopt as evaluation settings for our method. These methods capture
connectivity and homological structure but do not directly measure local
dimension, codimension, or anisotropy. Our approach is complementary:
topological methods capture global connectivity across scale, while
heat-geometric methods capture local geometric type at each point; the two
frameworks could in principle be combined, which we leave for future work.

\section{Heat Field Signatures}
\label{sec:method}

\subsection{Overview}

Given a point cloud
$X=\{x_1,\ldots,x_m\}\subset\mathbb R^n$, we seek a representation that
captures not only \emph{where} the points lie, but how their geometric
organization changes across scale. HFS first converts the discrete cloud
into a smooth multiscale heat field and then reads this field at three
complementary levels:
\[
    X
    \longrightarrow
    u_X(y,t)
    \longrightarrow
    \left[
        \Phi_{\mathrm{global}}(X),\;
        \Phi_{\mathrm{HDS}}(X),\;
        z_X
    \right]
    \longrightarrow
    \text{classifier}.
\]
Global signatures describe how the entire heat landscape concentrates and
dissipates; local signatures identify the geometric type surrounding each
point; and the Heat Dimension Spectrum (HDS) summarizes how these local
types are distributed across the shape. Together they capture both dominant
morphology and localized structures such as thin branches, junctions, and
mixed-dimensional interfaces. Figure~\ref{fig:hfs_pipeline} summarizes the
pipeline; derivations and geometric interpretation are given in
Appendix~\ref{app:method_details}. For general background on geometric heat-kernel
methods, see~\citet{grigoryan2009heat}.

\begin{figure}[t]
\centering
\includegraphics[width=\linewidth]{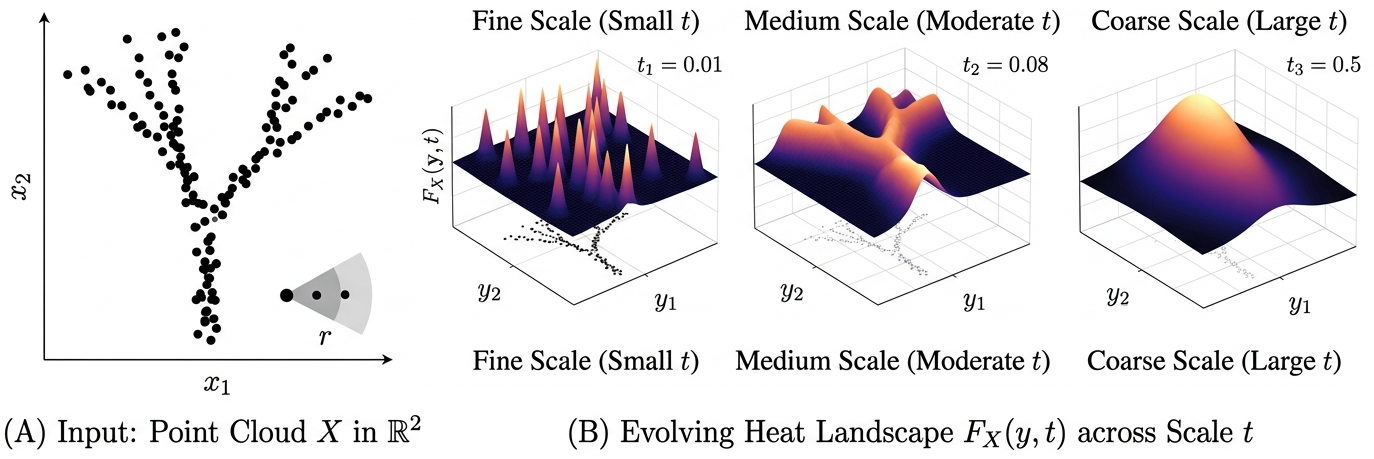}
\caption{\footnotesize
\textbf{Multiscale heat landscape.}
For visualization, we show the unnormalized Gaussian field
$F_X(y,t)=\sum_i e^{-\|y-x_i\|^2/4t}$. Fine scales resolve local
neighborhoods, intermediate scales reveal branches and junctions, and coarse
scales retain global structure. HFS uses the normalized field $u_X$, which
has the same spatial Gaussian structure at each scale.}
\label{fig:heat_landscape}
\vspace{-.2in}
\end{figure}

\subsection{Heat Field and Global Signatures}
\label{sec:global_indicators}

We associate to $X$ the ambient heat field
\quad $    u_X(y,t)
    =
    (4\pi t)^{-n/2}
    \sum_{j=1}^{m}
    \exp\!\left(
        -\frac{\|y-x_j\|^2}{4t}
    \right),
    \quad t>0,$
where $\sqrt t$ acts as a geometric length scale. Small $t$ resolves fine
local structure, while increasing $t$ progressively exposes coarser
organization. Define \quad $    D_{ij}=\|x_i-x_j\|,$ \quad $
    w_{ij}(t)
    =
    \exp\!\left(-\frac{D_{ij}^2}{4t}\right).$ %
Because $u_X$ is a Gaussian mixture, useful global functionals are available
directly from pairwise distances:
{\small 
\[
E_2(t)=(8\pi t)^{-n/2}\!\sum_{i,j}e^{-D_{ij}^2/8t},
\quad
C_2(t)=-t\,\partial_t\log E_2(t),
\quad
\mathcal E(t)=-\tfrac12\partial_tE_2(t),
\quad
\widetilde{\mathcal E}(t)=\frac{\mathcal E(t)}{E_2(t)+\varepsilon}.
\]
}

These measure, respectively, heat concentration, scale response,
spatial roughness, and normalized roughness. Concatenating them across
diffusion scales gives the global descriptor
$\Phi_{\mathrm{global}}(X)$. Their closed forms and geometric
interpretation are detailed in Appendix~\ref{app:method_details}.

\subsection{Local Heat-Geometric Signatures}
\label{sec:local_heat_signatures}

Global quantities describe the heat landscape as a whole. To expose
localized morphology, we instead examine logarithmic derivatives of
$u_X$ at each input point.

\noindent \textbf{Heat dimension.} \quad
We define
\quad $    d_{\mathrm{heat}}(x_i,t)
    =
    n+2t\,\partial_t\log u_X(x_i,t)
    =
    \dfrac{
        \sum_jD_{ij}^2w_{ij}(t)
    }{
        2t\sum_jw_{ij}(t)
    }.
    \label{eq:dheat_closed}
$

If the neighborhood of $x$ behaves locally like a $d$-dimensional
structure, then
\[
    u_X(x,t)
    \approx
    C(x)t^{-(n-d)/2}
    \qquad\Longrightarrow\qquad
    d_{\mathrm{heat}}(x,t)\approx d.
\]
Thus heat dimension provides a smooth, scale-dependent notion of local
intrinsic dimension: in $\mathbb R^3$, values near $1$, $2$, and $3$
correspond ideally to filament-, surface-, and volume-like regions.
\textbf{Theorem}~\ref{thm:heat-dim} in Appendix~\ref{app:heat_dim_intuition}
gives a finite-sample approximation guarantee and identifies the
intermediate regime between sampling resolution and geometric variation
in which this interpretation is valid.

\noindent \textbf{Directional geometry and scale transitions.} \quad
Heat dimension counts effective directions but does not distinguish them.
We therefore use the log-Hessian
\quad $    Q_X(x_i,t)
    =
    \nabla_y^2\log u_X(y,t)\big|_{y=x_i},
$\quad
with eigenvalues
$\lambda_1(x_i,t)\ge\cdots\ge\lambda_n(x_i,t)$.
For an ideal locally flat $d$-dimensional structure,
\[
    2t\lambda_k
    \approx
    \begin{cases}
        0,  & \text{tangent directions},\\
        -1, & \text{normal directions},
    \end{cases}
\]
so the normalized spectrum captures tangent--normal structure,
codimension, and anisotropy.

Finally,
\quad $    \tau(x_i,t)
    =
    \partial_{\log t}d_{\mathrm{heat}}(x_i,t)
    =
    t\,\partial_t d_{\mathrm{heat}}(x_i,t)$ \quad 
measures how rapidly the visible local geometry changes with scale.
Large $|\tau|$ highlights boundaries, junctions, protrusions, and
mixed-dimensional transitions.

Together,
\quad $    \psi_X(x_i,t)
    =
    \left[
        d_{\mathrm{heat}}(x_i,t),\;
        \tau(x_i,t),\;
        2t\lambda_1(x_i,t),\ldots,2t\lambda_n(x_i,t)
    \right]$ \quad
provides a compact local geometric type. In $\mathbb R^3$,
$\psi_X$ has five components per scale. Closed-form derivatives and
a geometric dictionary for these signatures are given in
Appendix~\ref{app:local_heat_geometry}.

\begin{figure}[t]
    \centering
    \includegraphics[width=\linewidth]{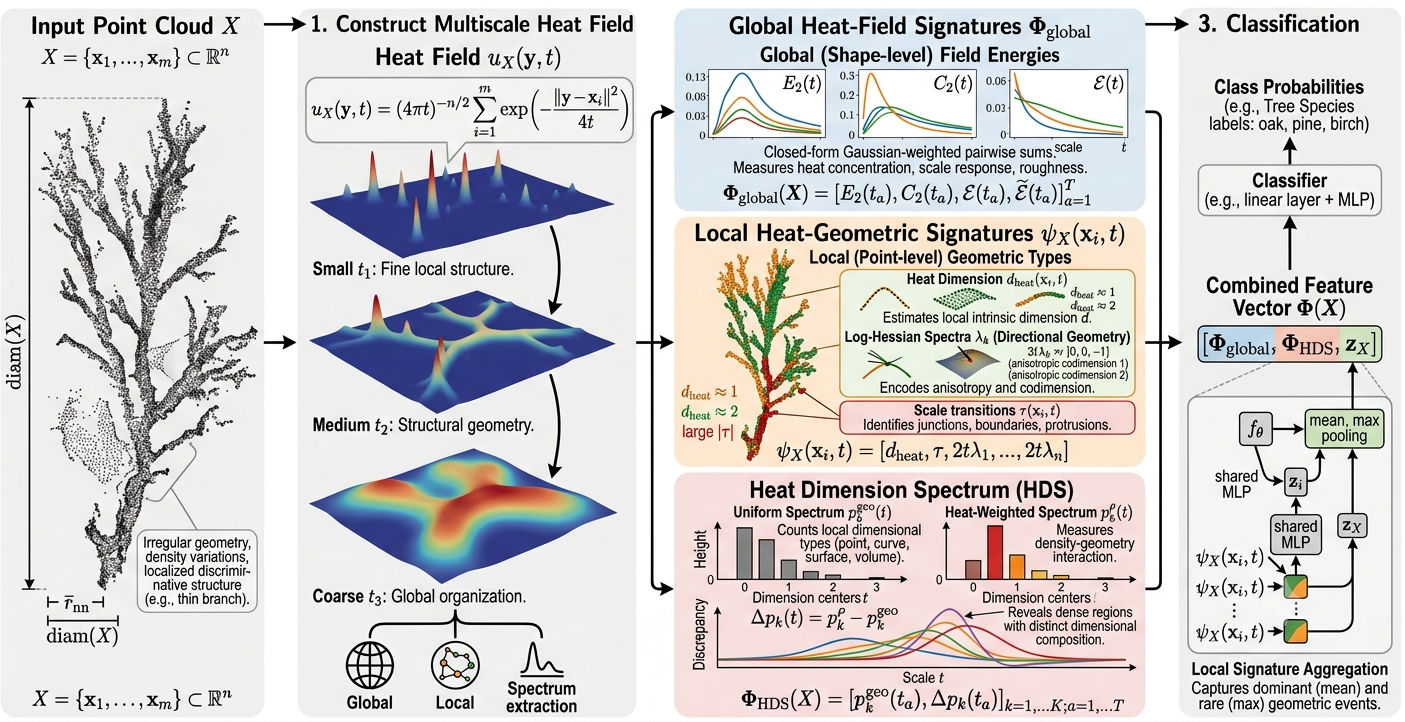}
    \caption{\footnotesize
    \textbf{Heat Field Signatures (HFS).}
    A point cloud is converted into a multiscale ambient heat field.
    HFS extracts complementary global heat responses, local geometric
    signatures, and the Heat Dimension Spectrum (HDS), then aggregates
    them into a fixed-dimensional representation for classification.}
    \label{fig:hfs_pipeline}
\end{figure}

\subsection{Heat Dimension Spectrum}
\label{sec:hds}

A point cloud rarely has a single geometric type: a neuron may contain
filamentary branches and a compact soma, while a spatial process may mix
dense clusters with sparse regions. We therefore summarize the distribution
of local heat dimensions with the \emph{Heat Dimension Spectrum} (HDS).

Let $c_1,\ldots,c_K$ denote dimension centers; in $\mathbb R^3$ we use
$c_k\in\{0,1,2,3\}$. With $\sigma=0.5$, define
\begin{align*}
\eta_k(d)
&=\frac{e^{-(d-c_k)^2/(2\sigma^2)}}
{\sum_{\ell=1}^K e^{-(d-c_\ell)^2/(2\sigma^2)}},
&
p_k^{\mathrm{geo}}(t)
&=\frac1m\sum_i \eta_k(d_{\mathrm{heat}}(x_i,t)),
\\
p_k^\rho(t)
&=\frac{\sum_i u_X(x_i,t)\eta_k(d_{\mathrm{heat}}(x_i,t))}
{\sum_i u_X(x_i,t)},
&
\Delta p_k(t)
&=p_k^\rho(t)-p_k^{\mathrm{geo}}(t).
\end{align*}
Here $p^{\mathrm{geo}}$ records the dimensional composition of the cloud,
while $\Delta p$ measures how that composition changes in heat-concentrated
regions. Concatenating $p^{\mathrm{geo}}$ and $\Delta p$ across scales gives
$\Phi_{\mathrm{HDS}}(X)$. Further motivation and comparison with low-order
summaries are given in Appendix~\ref{app:hds_details}.

\subsection{Vectorization and HFS Variants}
\label{sec:vectorization_classification}

Across $T$ scales, the per-point heat-geometric feature is
\[
\phi(x_i)
=
\left[
u_X(x_i,t_a),\;
d_{\mathrm{heat}}(x_i,t_a),\;
\tau(x_i,t_a),\;
2t_a\lambda_1(x_i,t_a),\ldots,
2t_a\lambda_n(x_i,t_a)
\right]_{a=1}^{T}.
\]
In $\mathbb R^3$, $\phi(x_i)$ has $6T$ components. We evaluate three
increasingly expressive HFS representations:

\begin{itemize}[leftmargin=*]
    \item \textbf{HFS-simple:} a fully closed-form descriptor using the
    global and HDS signatures with fixed statistical pooling of local heat
    features, \textit{excluding} the log-Hessian spectrum.
    \item \textbf{HFS-desc:} the full closed-form descriptor, adding pooled
    log-Hessian features to HFS-simple.
    \item \textbf{HFS-full:} uses the same per-point heat features as
    HFS-desc, but \textit{replaces fixed pooling with a shared MLP} followed by mean
    and max aggregation to obtain $z_X$.
\end{itemize}

Thus the general HFS representation has the form
\[
\boldsymbol{\Phi}(X)
=
\bigl[
\Phi_{\mathrm{global}}(X),\;
\Phi_{\mathrm{HDS}}(X),\;
z_X
\bigr],
\]
where $z_X$ is obtained by fixed statistical pooling for HFS-simple/HFS-desc
and learned pooling for HFS-full. Exact pooling statistics and architectures
are given in Appendix~\ref{app:implementation}.

Unless otherwise stated, we use $T$ logarithmically spaced diffusion times
between
\[
t_{\min}=c_{\min}\bar r_{\mathrm{nn}}^2,
\qquad
t_{\max}=c_{\max}\operatorname{diam}(X)^2.
\]
The endpoints track sampling resolution and coarse shape scale, respectively
(Appendix~\ref{app:scale_choices}). Dense HFS computation costs
$O(m^2T)$ for fixed ambient dimension; scalable approximations are discussed
in Appendix~\ref{app:large_scale_computation}.

\section{Experiments}
\label{sec:experiments}

We evaluate HFS on tasks where class identity depends on multiscale geometry,
density, branching, local dimension, or topological structure. We ask three
primary questions: \emph{(i)} does explicitly encoding this geometry improve
classification, \emph{(ii)} what is the end-to-end computational cost, and
\emph{(iii)} how does the representation behave under arbitrary rotations?
We additionally compare with multiparameter-persistence methods and ablate
the individual HFS components.

\subsection{Experimental Setup}
\label{sec:protocol}

\noindent \textbf{Datasets.} \quad
We use three synthetic density/topology benchmarks,
\textbf{Processes}, \textbf{Orbit5k}, and \textbf{DisksAnnuli}
~\citep{carriere2020multiparameter,kerber2024graphcode}, and four
real-world 3D morphology benchmarks: subcellular structures
(\textbf{Allen}~\citep{vasan2025interpretable,viana2023integrated}),
neurons (\textbf{NeuroMorpho}~\citep{ascoli2007neuromorpho,ascoli2018open}),
LiDAR trees (\textbf{FOR-species20K}~\citep{puliti2025forspecies20k}),
and protein folds (\textbf{SCOP}~\citep{proteinshake}); see
Table~\ref{tab:datasets}. These datasets span density variation, branching
and filamentary structure, multiscale organization, and local geometric type;
SCOP additionally has no canonical orientation and provides a natural
rotation stress test. Full construction, preprocessing, point budgets, and
evaluation protocols are given in Appendix~\ref{app:datasets}. On two
boundary tasks dominated by localized or rigid shape, IntrA and ATLAS-1,
HFS remains competitive but does not outperform the strongest
coordinate-based networks (Appendix~\ref{app:boundary}).

\noindent \textbf{HFS variants.} \quad
\textbf{HFS-simple} and \textbf{HFS-desc} use closed-form, training-free
feature extractors (\Cref{sec:method}). \textbf{HFS-full} adds a learned pooling head over the
per-point HFS features ($\sim\!0.2$M parameters), while
\textbf{HFS+DGCNN} augments a DGCNN backbone with HFS features concatenated
with coordinates. These variants separate the contributions of closed-form
signatures, learned pooling, and backbone integration.

\noindent \textbf{Baselines and evaluation.} \quad
We compare with FPFH~\citep{rusu2009fast}, HKS~\citep{sun2009concise},
PointNet~\citep{qi2017pointnet}, PointNet++~\citep{qi2017pointnet2},
DGCNN~\citep{wang2019dynamic}, and Point Transformer~\citep{zhao2021point}.
All descriptor methods use the same two-layer MLP classifier, while deep
baselines follow their published training recipes. Synthetic benchmarks
report OA over 20 splits (80/20; Orbit5k 70/30), and real benchmarks report
mACC over 5 folds. Rotation-specific baselines are introduced
in Section~\ref{sec:results}.

\noindent \textbf{Implementation.} \quad
All HFS variants use $T=8$ logarithmically spaced diffusion scales with
$c_{\min}=0.05$ and $c_{\max}=0.25$, giving 216 dimensions for HFS-simple,
336 for HFS-desc, and $6T=48$ per-point features in $\mathbb R^3$.
Full hyperparameters, point budgets, augmentation details, and implementation
settings are given in Appendix~\ref{app:implementation}; scale choices are
analyzed in Appendix~\ref{app:scale_choices}. Code and configurations are
available at\\ \url{https://github.com/Wang-Yuanqing/Heat-Field-Signatures}.


\subsection{Results}
\label{sec:results}

\noindent \textbf{Point Cloud Classification.} \quad
Table~\ref{tab:main_results} shows a consistent pattern: an HFS variant
achieves the best performance on all seven benchmarks. On the synthetic
tasks, the closed-form \textbf{HFS-desc} extractor already surpasses all
deep baselines on Processes and Orbit5k, while learned pooling raises HFS-full
to $99.8\%$ on DisksAnnuli. The gains persist on real morphology: HFS-full
or HFS+DGCNN leads on subcellular structures, neurons, and LiDAR trees.
The largest margin occurs on SCOP, where HFS-full reaches $59.6\%$ versus
$35.7\%$ for the strongest deep baseline, a $23.9$-point gain using
$C_\alpha$ coordinates alone.

\begin{table}[t]
\centering
\caption{\footnotesize
\textbf{Point-cloud classification.}  Classical and deep baselines compared with HFS variants.
\textbf{Bold} = best, \underline{underline} = second; HFS rows shaded.}
\label{tab:main_results}
\resizebox{\textwidth}{!}{%
\begin{tabular}{l ccc | cccc}
\toprule
& \multicolumn{3}{c}{\textit{Synthetic (OA over 20 splits)}}
& \multicolumn{4}{c}{\textit{Real (mACC over 5 folds)}} \\
\cmidrule(lr){2-4}\cmidrule(lr){5-8}
\textbf{Method}
    & Processes & Orbit5k & DisksAnnuli
    & Allen & NeuroMorpho & FOR-species & SCOP \\
\midrule
FPFH~\citep{rusu2009fast}          & 84.8\std{1.1} & 30.6\std{1.1} & 36.9\std{2.0} & 90.4\std{0.9} & 58.5\std{1.4} & 46.7\std{2.2} & 24.1\std{1.8} \\
HKS~\citep{sun2009concise}         & 85.2\std{1.2} & 32.8\std{1.1} & 50.4\std{1.8} & 79.5\std{0.8} & 30.7\std{1.3} & 31.2\std{0.8} & 20.0\std{1.1} \\
\midrule
PointNet~\citep{qi2017pointnet}    & 52.2\std{4.1} & 71.6\std{7.3} & 20.0\std{0.8} & 90.1\std{1.3} & 58.9\std{1.4} & 51.6\std{3.1} & 33.9\std{3.9} \\
PointNet++~\citep{qi2017pointnet2} & 59.0\std{3.0} & 89.1\std{1.2} & 91.2\std{2.4} & 92.8\std{0.9} & 64.5\std{2.1} & 62.8\std{1.5} & 19.4\std{1.6} \\
DGCNN~\citep{wang2019dynamic}      & 57.5\std{1.7} & 90.7\std{0.9} & 96.9\std{0.7} & 95.8\std{0.3} & 59.0\std{1.3} & \underline{66.3\std{0.9}} & 35.1\std{2.7} \\
Point Tr~\citep{zhao2021point}     & 60.3\std{1.8} & 88.5\std{1.5} & 82.0\std{31.3} & 95.3\std{0.2} & 63.7\std{1.2} & 57.3\std{1.3} & 35.7\std{2.5} \\
\midrule
\rowcolor{green!8}
HFS-simple                         & 97.3\std{0.7} & 89.3\std{0.7} & 91.4\std{0.9} & 97.0\std{0.4} & 65.4\std{1.5} & 63.5\std{1.5} & 49.3\std{2.8} \\
\rowcolor{green!8}
HFS-desc                           & \textbf{98.3\std{0.4}} & 93.2\std{0.5} & 92.1\std{0.8} & 97.8\std{0.3} & 72.8\std{1.4} & 65.9\std{1.5} & \underline{58.0\std{2.9}} \\
\rowcolor{green!8}
HFS-full                           & \underline{97.5\std{0.4}} & \textbf{97.5\std{0.3}} & \textbf{99.8\std{0.1}} & \underline{97.9\std{0.2}} & \underline{74.1\std{3.3}} & \textbf{67.6\std{1.2}} & \textbf{59.6\std{2.2}} \\
\rowcolor{green!8}
HFS+DGCNN                          & 81.4\std{1.3} & \underline{93.4\std{0.3}} & \underline{98.8\std{0.3}} & \textbf{98.2\std{0.2}} & \textbf{77.6\std{1.1}} & 65.1\std{1.1} & 54.7\std{1.3} \\
\bottomrule
\end{tabular}}
\vspace{-.2in}
\end{table}

\begin{wraptable}{r}{3in}
\centering
\caption{\footnotesize
\textbf{End-to-end runtime (seconds).}
Feature extraction, training, and inference are timed identically on one node
(1 H200 + 16 CPU workers). $\mathcal{S}_{\mathrm{DGCNN}}$ denotes the median
speed-up relative to DGCNN across datasets ($>1$ = faster). Lower is better
for runtime; \textbf{bold} = fastest, \underline{underline} = second; HFS rows shaded.}
\label{tab:time}
\resizebox{\linewidth}{!}{%
\begin{tabular}{l rrrr r}
\toprule
\textbf{Method} & Allen & NMorpho & FOR-spec & SCOP
& $\mathcal{S}_{\mathrm{DGCNN}}$ \\
\midrule
FPFH         & \textbf{33.6} & \textbf{29.6} & \textbf{19.1} & 19.9 & \textbf{43.3} \\
  HKS          & 126.8 & \underline{39.9} & \underline{24.7} & 4.1 & 24.5 \\
  \midrule
  PointNet     & 370.0 & 279.6 & 178.8 & 62.5 & 4.6 \\
  PointNet++   & 1949.6 & 1420.8 & 883.5 & 86.4 & 0.9 \\
  DGCNN        & 1858.8 & 1366.0 & 774.0 & 72.7 & 1.0 \\
  Point Tr     & 1058.6 & 773.2 & 491.7 & 98.5 & 1.7 \\
  \midrule
  \rowcolor{green!8}
  HFS-simple   & \underline{55.0} & 54.2 & 56.0 & \textbf{3.0} & \underline{24.7} \\
  \rowcolor{green!8}
  HFS-desc     & 353.4 & 313.9 & 206.0 & \underline{3.3} & 4.8 \\
  \rowcolor{green!8}
  HFS-full     & 206.9 & 147.7 & 96.1 & 29.2 & 8.5 \\
  \rowcolor{green!8}
  HFS+DGCNN    & 2339.0 & 1804.7 & 984.8 & 74.6 & 0.8 \\
\bottomrule
\end{tabular}}
\vspace{-.2in}
\end{wraptable}

\noindent \textbf{Runtimes.} \quad
Table~\ref{tab:time} reports end-to-end runtime, including feature extraction,
training, and inference. Among the accuracy-competitive methods,
\textbf{HFS-full} is about $8.5\times$ faster than DGCNN in median runtime,

while \textbf{HFS-simple} is about $25\times$ faster and requires no learned
feature extractor. \textbf{HFS-desc} remains $2.4$--$5.5\times$ cheaper than PointNet++, DGCNN and Point Transformer, whereas \textbf{HFS+DGCNN} trades efficiency for its stronger
accuracy on several datasets. Thus the main HFS variants achieve the accuracy
gains of Table~\ref{tab:main_results} at substantially lower computational cost.

\noindent \textbf{Rotation invariance.} \quad
Table~\ref{tab:rotation} evaluates each trained model on the same test clouds
before and after an independent random $SO(3)$ rotation
(Appendix~\ref{app:rotation}). The contrast is striking. Because HFS depends
only on pairwise distances and eigenvalues of the heat-field Hessian, its
underlying representation is invariant by construction. 

\begin{wraptable}{r}{3in}
\vspace{-.2in}
\centering
\caption{\footnotesize
\textbf{Rotation invariance.} Accuracy change ($\Delta$ mACC) under random
$SO(3)$ rotations. HFS is invariant by construction; the small HFS-desc
deviations arise from numerical amplification of nearly constant Hessian
features during standardization. DGCNN-SO(3) and VN-DGCNN obtain robustness
through augmentation and equivariance, respectively.}
\label{tab:rotation}
\resizebox{\linewidth}{!}{%
\begin{tabular}{l cccc c}
\toprule
\textbf{Method} & Allen & NMorpho & FOR-spec & SCOP & Invariance \\
\midrule
FPFH         & 6.4  & 8.0  & 2.3  & $-1.0$ & approximate \\
HKS     & 28.0 & 7.1  & 12.5 & 2.3    & approximate \\
\midrule
PointNet   & 35.8 & 38.0 & 39.7 & $-2.0$ & none \\
PointNet++ & 11.1 & 30.9 & 45.1 & $-1.0$ & none \\
DGCNN      & 19.0 & 31.5 & 47.8 & \textbf{0.0} & none \\
Point Tr     & 21.0 & 34.7 & 40.7 & \textbf{0.0} & none \\
\midrule
DGCNN-SO(3)                    & $-0.2$ & 0.6 & 0.6 & \textbf{0.0} & augmented \\
VN-DGCNN    & 0.1 & \textbf{0.0} & \textbf{0.0} & \textbf{0.0} & architectural \\
\midrule
\rowcolor{green!8}
HFS-simple                         & \textbf{0.0} & \textbf{0.0} & \textbf{0.0} & \textbf{0.0} & exact  \\
\rowcolor{green!8}
HFS-desc                           & \textbf{0.0} & $-0.8$ & $-0.1$ & \textbf{0.0} & $\sim$exact  \\
\rowcolor{green!8}
HFS-full                           & \textbf{0.0} & \textbf{0.0} & \textbf{0.0} & \textbf{0.0} & exact  \\
\rowcolor{green!8}
HFS+DGCNN                          & $-0.1$ & 1.1 & $-0.7$ & 0.7 & inherited  \\
\bottomrule
\end{tabular}}
\vspace{-.1in}
\end{wraptable}
HFS-simple and
HFS-full therefore show no accuracy change under rotation, while HFS-desc
exhibits only sub-point numerical fluctuations. We traced these small
deviations to Hessian-eigenvalue percentile features that become nearly
constant across clouds at the finest scales: standardization can amplify
floating-point perturbations from eigendecomposition enough to change a few
classifier predictions. Repeated runs produced fluctuations of comparable
magnitude, confirming that this is a numerical conditioning effect rather
than a failure of the geometric invariance. In contrast, standard
coordinate-based networks lose $11$--$48$ mACC points on Allen,
NeuroMorpho, and FOR-species, often erasing much of their original
performance. Moreover, the HFS guarantee is not limited to $SO(3)$: it
extends to the full Euclidean group $O(n)\ltimes\mathbb R^n$, including
rotations, reflections, and translations, in any ambient dimension.

We also compare with two approaches designed specifically for rotation
robustness. \textbf{DGCNN-SO(3)}~\citep{esteves2018learning} retrains DGCNN
with Haar $SO(3)$ augmentation, while
\textbf{VN-DGCNN}~\citep{deng2021vector} uses the rotation-equivariant Vector
Neuron architecture. Both largely eliminate the rotation drop, but through
augmentation or architectural constraints rather than an invariant input
representation. Interestingly, adding HFS features to the otherwise
rotation-sensitive DGCNN almost completely removes its degradation:
HFS+DGCNN changes by at most $1.1$ points across the four datasets, compared
with losses of up to $47.8$ points for DGCNN alone. This suggests that the
invariant HFS channel can strongly stabilize a coordinate-based backbone,
although this is an empirical effect rather than a formal guarantee---the HFS
features may simply dominate the hybrid prediction. VN-DGCNN also remains
  well below the best HFS variant on the same split: clean mACC
  $94.1/51.7/55.4/55.3$ versus $98.2/77.6/68.1/63.0$ on
  Allen/NeuroMorpho/FOR-species/SCOP. SCOP provides an instructive
contrast: conventional networks are already relatively insensitive to rotation
because protein structures have no canonical reference frame. Thus robustness
on SCOP largely reflects the data distribution, whereas HFS invariance is a
representation-level property that holds independently of the dataset.

\noindent \textbf{Comparison with multiparameter persistence.} \quad
Processes and DisksAnnuli originate in the multiparameter-persistence
literature, where density and geometric scale are modeled jointly, making
them particularly relevant benchmarks for HFS. We compare against MP-Images, MP-Landscapes,
GRIL, MP-HSM-C (Signed Barcodes), and
GraphCode variants (GC, GC-NE), using the
published results of~\citet{kerber2024graphcode} under the same 20-split
evaluation protocol. Further background on the density--distance
bifiltration is given in Appendix~\ref{app:mp}.

\begin{wraptable}{r}{3in}
\vspace{-.2in}
\centering
\caption{\footnotesize \textbf{Multiparameter-persistence baselines.}
Accuracy (\%) on the synthetic benchmarks; MP baseline numbers are
  from~\citet{kerber2024graphcode}.}
\label{tab:mp_comparison}
\setlength{\tabcolsep}{4pt}
\resizebox{\linewidth}{!}{%
\begin{tabular}{l cc}
\toprule
\textbf{Method} & DisksAnnuli & Processes \\
\midrule
MP-I~\citep{carriere2020multiparameter}   & 64.1 & 66.0 \\
MP-L~\citep{vipond2020multiparameter}     & 37.2 & 50.2 \\
GRIL~\citep{xin2023gril}                  & 74.9 & 61.1 \\
MP-HSM-C~\citep{loiseaux2023stable}       & 57.0 & 70.7 \\
GC~\citep{kerber2024graphcode}            & 86.9 & 83.4 \\
  GC-NE~\citep{kerber2024graphcode}         & 82.8 & 83.1 \\
  \midrule
  \rowcolor{green!8}
  HFS-desc & \underline{92.1} & \textbf{98.3} \\
  \rowcolor{green!8}
  HFS-full & \textbf{99.8} & \underline{97.5} \\
\bottomrule
\end{tabular}}
\vspace{-.1in}
\end{wraptable}
Table~\ref{tab:mp_comparison} shows a large gap in favor of HFS:
HFS-full reaches $99.8\%$ on DisksAnnuli versus $86.9\%$ for GraphCode,
while the closed-form HFS-desc extractor reaches $98.3\%$ on Processes
versus $83.4\%$. The striking point is that HFS captures the same
density--geometry interaction without constructing or vectorizing a
persistence module. HFS and persistence remain conceptually complementary,
but on these density-aware benchmarks the heat-field representation is both
simpler and substantially more accurate.

\noindent \textbf{Ablation Studies.} \quad
Table~\ref{tab:ablation} isolates which parts of the representation drive
the gains in Table~\ref{tab:main_results}. All ablations use the
\emph{closed-form} HFS-desc descriptor rather than HFS-full: the latter adds
a learned pooling head over the same six blocks, so ablating each block would
require retraining the head for all thirteen configurations, confounding
representation content with training noise. The closed-form descriptor isolates
the former cleanly. Here $\diamond$ denotes the complete $336$-dimensional
descriptor---\emph{not} HFS-full---formed by six blocks: global heat-field
energies, the heat dimension spectrum (HDS), and pooled statistics of four
local channels (density $u_X$, heat dimension $d_{\mathrm{heat}}$, transition
rate $\tau$, and the log-Hessian spectrum). We ablate each block in both
directions: keeping only that block (\emph{X only}) or removing it
($\diamond,-,$X). Since these six blocks form exactly the descriptor in
Table~\ref{tab:main_results}, $\diamond$ reproduces HFS-desc and
$\diamond,-,$log-Hessian reproduces HFS-simple; the two tables agree within
$0.7$ points throughout, providing an internal consistency check.
Three findings stand out.

First, \textbf{no single block suffices}: on every
dataset the best individual block trails the full descriptor, by as much as
$4.8$ points (FOR-species), so the gains come from combining channels rather
than from one dominant feature. Second, \textbf{which block matters is
dataset-dependent, and in the direction the method predicts}: the density
channel alone nearly closes the gap on DisksAnnuli ($89.0$ of $92.2$), whose
classes are defined by sampling density, yet collapses on the subcellular
Allen data ($27.5$), where density carries little class information;
conversely, the log-Hessian block, which encodes local anisotropy, is the
strongest single block on every real dataset. A descriptor tuned to one
geometric cue could not reproduce this pattern. Third, \textbf{the
leave-one-out direction shows redundancy}: removing any single block costs
at most $1.7$ points, because each channel is a different derivative of the
same heat field and the others partially recover the lost signal. The
exceptions are informative rather than incidental: removing the log-Hessian
block costs $8.4$ points on SCOP and $6.2$ on NeuroMorpho -- the two
datasets whose labels are carried by filamentary structure -- and removing
all four local blocks together costs up to $10.7$ points. The
scale-schedule ablation ($T$, $c_{\min}$, $c_{\max}$) is reported in
Appendix~\ref{app:ablations}.

\begin{table}[t]
\centering
\caption{\footnotesize \textbf{Component ablation.} We report two types of
ablations: the performance of each part alone, and of all parts but one.
$\diamond=$ HFS-desc and $\diamond\,-\,$log-Hessian $=$
HFS-simple of Table~\ref{tab:main_results}. \textbf{Bold} = full
descriptor; \underline{underline} = best single block per column. Avg.\ is
the unweighted mean across all seven benchmarks and is not part of the
bold/underline scheme. Same classifier, splits, and folds as main table.}
\label{tab:ablation}
\resizebox{\textwidth}{!}{%
\begin{tabular}{c | l r ccc | cccc | c}
\toprule
& & & \multicolumn{3}{c}{\textit{Synthetic (OA)}} & \multicolumn{4}{c}{\textit{Real (mAcc)}} & \\
\cmidrule(lr){4-6}\cmidrule(lr){7-10}
\textbf{Ablation type} & \textbf{Configuration} & dim & Proc. & Orbit5k & DisksAnn.
    & Allen & NMorpho & FOR-s.. & SCOP & \textbf{Avg.} \\
\midrule
\rowcolor{green!8}
\textit{Full descriptor} & HFS-desc ($\diamond$)      & 336 & \textbf{98.3} & \textbf{93.3} & \textbf{92.2} & \textbf{97.8} & \textbf{72.3} & \textbf{65.8} & \textbf{58.0} & 82.5 \\
\midrule
 & global only              &  32 & 97.5 & 65.5 & 54.2 & 90.5 & 64.3 & 58.4 & 41.3 & 67.4 \\
 & HDS only                 &  64 & 96.4 & 87.1 & 85.1 & 96.4 & 59.7 & 58.8 & 42.7 & 75.2 \\
\textbf{Single block} & density only             &  40 & 97.0 & 21.8 & \underline{89.0} & 27.5 & 35.7 & 40.8 & 42.1 & 50.6 \\
\textbf{only} & heat-dim only            &  40 & 95.4 & 84.8 & 76.3 & 95.9 & 58.3 & 59.0 & 43.7 & 73.3 \\
 & transition only          &  40 & 94.9 & 85.6 & 74.3 & 96.2 & 59.2 & 57.7 & 44.3 & 73.2 \\
 & log-Hessian only         & 120 & \underline{97.8} & \underline{90.2} & 83.2 & \underline{97.2} & \underline{68.1} & \underline{61.0} & \underline{54.6} & 78.9 \\
\midrule
 & $\diamond$ $-$ global          & 304 & 98.3 & 93.4 & 91.8 & 97.4 & 71.0 & 65.4 & 58.0 & 82.2 \\
 & $\diamond$ $-$ HDS             & 272 & 98.3 & 92.1 & 91.2 & 97.8 & 72.1 & 65.4 & 56.8 & 82.0 \\
\textbf{Full minus}  & $\diamond$ $-$ density         & 296 & 98.2 & 93.3 & 90.5 & 97.8 & 72.0 & 66.1 & 58.2 & 82.3 \\
\textbf{one block} & $\diamond$ $-$ heat-dim        & 296 & 98.3 & 93.0 & 92.1 & 97.9 & 73.1 & 66.8 & 57.6 & 82.7 \\
 & $\diamond$ $-$ transition      & 296 & 98.3 & 93.0 & 92.1 & 97.8 & 72.8 & 65.8 & 58.3 & 82.6 \\
 & $\diamond$ $-$ log-Hessian & 216 & 97.3 & 89.4 & 91.4 & 97.3 & 66.1 & 63.5 & 49.6 & 79.2 \\
 & $\diamond$ $-$ all four local blocks &  96 & 97.6 & 87.8 & 87.0 & 96.7 & 64.4 & 61.9 & 47.3 & 77.5 \\
\bottomrule
\end{tabular}}
\vspace{-.2in}
\end{table}

\subsection{Discussion}
\label{sec:discussion}

Taken together, the experiments support a simple conclusion: when class
identity is carried by intrinsic multiscale geometry, it can be advantageous
to \emph{compute} that geometry rather than ask a network to discover it from
coordinates. Across seven benchmarks, HFS combines strong classification
performance with substantially lower computational cost and exact geometric
invariance. The result is especially striking because much of this advantage
is already present in the closed-form HFS descriptor; learned pooling
  (HFS-full) and backbone integration (HFS+DGCNN) improve it further, but they
  are not what creates the geometric signal.

More broadly, HFS illustrates a different design principle for point-cloud
learning. Instead of treating geometry as an implicit intermediate
representation to be learned, we first lift the discrete cloud to a smooth
multiscale field and then extract quantities with direct geometric meaning:
concentration, intrinsic dimension, anisotropy, and scale transitions.
The ablations show that these cues are complementary rather than redundant,
while the rotation experiments show a second benefit of explicit structure:
properties that require augmentation or specialized architectures in
coordinate networks can arise automatically from the representation itself.
This suggests that classical geometric analysis and modern learning need not
be competing approaches; analytic geometry can provide the representation,
with learning reserved for the task-specific readout.

The claim is deliberately not universal. On tasks dominated by rigid global
shape or localized appearance, coordinate-based networks remain competitive
or stronger (Appendix~\ref{app:boundary}). HFS targets a complementary regime:
irregular data where density, scale, local dimension, and morphology are
entangled. The broader opportunity is therefore not simply a new point-cloud
descriptor, but a family of representations obtained by interrogating
multiscale fields with geometric invariants---of which the signatures studied
here are only a first set.

\vspace{-.1in}

\section{Conclusion}
\label{sec:conclusion}

We introduced \emph{Heat Field Signatures} (HFS), a multiscale interface that
lifts an irregular point cloud to a family of smooth ambient heat fields,
making tools from geometric analysis directly accessible to point-cloud
learning. The specific signatures developed here provide closed-form global and
local geometric descriptors, require no graph, mesh, or persistence
computation, and integrate naturally with lightweight or deep classifiers.
Across diverse synthetic and real-world benchmarks, the results show that this
heat-field interface provides an effective representation for irregular
geometric data.
Several directions remain open. Sparse kernels and landmarks could scale HFS
to very large point clouds, while richer differential and integral invariants
could capture geometry beyond the signatures studied here. HFS could also be
combined with persistent homology to couple local geometric type with global
connectivity. More broadly, HFS suggests a general strategy for representation
learning: lift discrete data to a smooth multiscale geometric object, then apply
the geometric tools best suited to the task.






\subsubsection*{Acknowledgments}
This work was partially supported by National Science Foundation under grants DMS-2220613, and DMS-2229417, and Simons Foundation under grant MPS-SFM-21269.

\bibliography{references}
\bibliographystyle{iclr2027_conference}

\clearpage
\appendix

\section*{Appendix}

\section{Heat Field Signatures: Details and Geometric Interpretation}
\label{app:method_details}

This section collects the derivations and geometric interpretation of the
heat-field quantities used in the main paper. The central idea is simple:
Gaussian smoothing turns a discrete point cloud into a differentiable
multiscale landscape. Global functionals describe the landscape as a whole,
while logarithmic derivatives at the sample points describe local geometric
type. For general background on heat-kernel
methods and geometric analysis, see~\cite{grigoryan2009heat,evans2010pde,docarmo1992riemannian}.

\subsection{From Point Clouds to Heat Fields}
\label{app:ball_count_to_heat}
\label{app:heat_landscape}

A hard probe of local geometry is the ball count
\[
    N_X(y,r)=\#\{x_i\in X:\|y-x_i\|\le r\}.
\]
If the data near $y$ are approximately $d$-dimensional, then
$N_X(y,r)\propto r^d$ over an appropriate scale range. HFS replaces this
discontinuous count by the Gaussian-smoothed field
\[
    u_X(y,t)
    =
    (4\pi t)^{-n/2}
    \sum_{i=1}^{m}
    \exp\!\left(-\frac{\|y-x_i\|^2}{4t}\right),
    \qquad t>0,
\]
where $\sqrt t$ acts as a spatial scale. The normalization makes
$u_X$ satisfy the heat equation
\[
    \partial_tu_X=\Delta_yu_X.
\]

Geometrically, small $t$ resolves individual points and fine local
structure; intermediate scales reveal branches, sheets, clusters, and
junctions; and large $t$ retains only coarse organization. HFS measures
this evolution without explicitly constructing a mesh, graph Laplacian,
or surface.

\subsection{Global Heat-Field Signatures}
\label{app:global_heat_geometry}
\label{app:energy}
\label{app:capacity}
\label{app:dirichlet}
\label{app:summary}

Let $D_{ij}=\|x_i-x_j\|$. Since $u_X$ is a Gaussian mixture, its principal
global functionals have closed forms:
\begin{align}
    E_2(t)
    &=
    \int_{\mathbb R^n}u_X(y,t)^2\,dy
    =
    (8\pi t)^{-n/2}
    \sum_{i,j}
    e^{-D_{ij}^2/8t},
    \label{eq:app_E2}
    \\
    C_2(t)
    &=
    -t\,\partial_t\log E_2(t)
    =
    \frac n2
    -
    \frac{
        \sum_{i,j}\frac{D_{ij}^2}{8t}e^{-D_{ij}^2/8t}
    }{
        \sum_{i,j}e^{-D_{ij}^2/8t}
    },
    \label{eq:app_C2}
    \\
    \mathcal E(t)
    &=
    \int_{\mathbb R^n}\|\nabla u_X(y,t)\|^2\,dy
    =
    -\frac12\partial_tE_2(t),
    \label{eq:app_dirichlet}
    \\
    \widetilde{\mathcal E}(t)
    &=
    \frac{\mathcal E(t)}{E_2(t)+\varepsilon}.
    \label{eq:app_dirichlet_norm}
\end{align}
These quantities answer complementary questions. $E_2$ measures
\emph{concentration}: nearby heat kernels overlap strongly and increase the
energy. $C_2$ measures the rate at which this concentration changes with
scale. In an ideal $d$-dimensional regime,
\[
    E_2(t)\sim t^{-(n-d)/2},
    \qquad
    C_2(t)\approx\frac{n-d}{2},
\]
so $C_2$ is globally sensitive to codimension. Finally,
$\mathcal E$ measures spatial roughness of the heat landscape, while
$\widetilde{\mathcal E}$ removes its overall energy scale.

Thus the global block records not simply the size of the point cloud, but
how concentration and roughness evolve as neighboring structures begin to
interact.

\paragraph{Heat bodies for visualization.}
\label{app:heat_bodies_method}
\label{app:heat_body}
\label{app:volume}
\label{app:heat_body_compactness}
For qualitative visualization only, we also consider superlevel sets
\[
    \Omega_{\alpha,t}(X)
    =
    \{y:u_X(y,t)\ge \alpha M_X(t)\},
    \qquad
    M_X(t)=\max_i u_X(x_i,t).
\]
At suitable scales these behave like smooth neighborhoods of the data:
filaments produce tube-like regions, surfaces produce slab-like regions,
and disconnected structures merge as $t$ increases. Because their volume
and boundary geometry require grid evaluation, heat bodies are not used in
the main closed-form HFS representation.

\subsection{Local Heat Geometry}
\label{app:local_heat_geometry}
\label{app:local_intro}

Global signatures summarize the whole heat landscape, but many
discriminative structures---thin branches, junctions, boundaries, and
mixed-dimensional interfaces---occupy only a small subset of points.
HFS therefore evaluates logarithmic derivatives of the heat field at each
$x_i$.

Let
\[
    w_{ij}(t)
    =
    \exp\!\left(-\frac{D_{ij}^2}{4t}\right),
    \qquad
    W_i(t)=\sum_jw_{ij}(t).
\]

\paragraph{Heat dimension.}
\label{app:heat_dim_intuition}
The heat dimension is
\[
    d_{\mathrm{heat}}(x_i,t)
    =
    n+2t\,\partial_t\log u_X(x_i,t)
    =
    \frac{
        \sum_jD_{ij}^2w_{ij}(t)
    }{
        2t\,W_i(t)
    }.
    \label{eq:app_dheat_closed}
\]
It is a smooth analogue of the local mass-growth exponent. Indeed, if a
neighborhood behaves like a $d$-dimensional set,
\[
    u_X(x,t)\approx C(x)t^{-(n-d)/2},
\]
then
\[
    d_{\mathrm{heat}}(x,t)\approx d.
\]
Thus in $\mathbb R^3$ the values $1,2,3$ correspond ideally to
filament-, surface-, and volume-like neighborhoods. At scales below the
sampling resolution the self-point can dominate and
$d_{\mathrm{heat}}\approx0$, which motivates avoiding excessively small
$t$.

\paragraph{Finite-sample guarantee: informal statement.}
At a smooth point of a $d$-dimensional manifold,
$d_{\mathrm{heat}}(x,t)$ is close to $d$ when the heat scale lies between
the sampling scale and the geometric variation scale. Up to logarithmic
factors, the error has the form
\[
    \underbrace{O(t)}_{\text{geometry/density bias}}
    +
    \underbrace{
    O\!\left(
        \frac{1}{\sqrt{m\,p(x)\,t^{d/2}}}
    \right)}_{\text{finite-sample noise}},
\]
together with a smaller correction caused by the self-point.

\begin{theorem}[Finite-sample approximation of heat dimension]
\label{thm:heat-dim}
Let $M\subset\mathbb R^n$ be a compact $C^3$ embedded submanifold without
boundary, of intrinsic dimension $d<n$, with
$\operatorname{reach}(M)\ge\tau>0$. Let $p$ be a $C^2$ density on $M$
satisfying
\[
    0<p_{\min}\le p\le p_{\max},
    \qquad
    \|\nabla_Mp\|\le Lp_{\max},
    \qquad
    \|\nabla_M^2p\|\le L^2p_{\max}.
\]
Fix $x\in M$, set $x_1=x$, and draw $x_2,\ldots,x_m$ independently from
$p$. Define
\[
    \mu(x,t)
    =
    (m-1)p(x)(4\pi t)^{d/2}.
\]
There exist constants $c,C_0,C_1,C_2,C_3>0$, depending only on
$d,n$, and $p_{\max}/p_{\min}$, such that if
\[
    t(\tau^{-1}+L)^2\le c,
    \qquad
    \mu(x,t)\ge C_0\log\frac{2}{\delta},
\]
then, with probability at least $1-\delta$,
\[
    \bigl|d_{\mathrm{heat}}(x,t)-d\bigr|
    \le
    C_1t(\tau^{-1}+L)^2
    +
    C_2
    \sqrt{\frac{\log(2/\delta)}{\mu(x,t)}}
    +
    \frac{C_3}{\mu(x,t)}.
\]
\end{theorem}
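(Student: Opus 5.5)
We write $d_{\mathrm{heat}}(x,t)$ as a ratio of two empirical sums and control each separately. With $x_1=x$, the self term contributes $w_{11}=1$ to the denominator and $D_{11}^2w_{11}=0$ to the numerator. Hence
\[
d_{\mathrm{heat}}(x,t)=\frac{S_1}{2t\,(1+S_0)},\qquad
S_0=\sum_{j\ge2}e^{-\|x-x_j\|^2/4t},\quad
S_1=\sum_{j\ge2}\|x-x_j\|^2e^{-\|x-x_j\|^2/4t}.
\]
The plan has three steps: compute the expectations $\mathbb E S_0,\mathbb E S_1$ to second order in $t$, concentrate $S_0,S_1$ around them, and propagate these errors through the ratio. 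The self point gives the $C_3/\mu$ term, since $1+S_0=S_0(1+1/S_0)$ and $S_0\asymp\mu$.

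\textbf{Step 1 (bias).} Write $\mathbb E S_0=(m-1)\int_M e^{-\|x-y\|^2/4t}p(y)\,dy$ and similarly for $S_1$.

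First localize to a geodesic ball of radius $r=C\sqrt{t\log(1/t)}$, well inside the reach. Outside this ball the Gaussian tail is $O(t^{k})$ relative to the main term and can be absorbed.

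Inside, use normal coordinates $y=\exp_x(v)$ with $v\in T_xM\cong\mathbb R^d$. The reach bound gives curvature bounded by $1/\tau$, so:
\begin{itemize}
\item $\|x-y\|^2=|v|^2+O(|v|^4/\tau^2)$;
\item the volume density satisfies $\theta(v)=1+O(|v|^2/\tau^2)$;
\item $p(y)=p(x)+\nabla p\cdot v+O(L^2p_{\max}|v|^2)$.
\end{itemize}
Integrate against the Gaussian $e^{-|v|^2/4t}$. Odd terms vanish by symmetry, which is why the bias is $O(t)$ rather than $O(\sqrt t)$. This gives
\[
\mathbb E S_0=\mu(1+O(t(\tau^{-1}+L)^2)),\qquad
\mathbb E S_1=2td\,\mu(1+O(t(\tau^{-1}+L)^2)),
\]
using the Gaussian moment identity $\int|v|^2e^{-|v|^2/4t}dv/\int e^{-|v|^2/4t}dv=2td$. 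The ratio of expectations is therefore $d+O(t(\tau^{-1}+L)^2)$. The hypothesis $t(\tau^{-1}+L)^2\le c$ keeps the $1+O(\cdot)$ factors bounded away from $0$; the ratio $p_{\max}/p_{\min}$ enters because the relative density perturbation is $L^2p_{\max}/p(x)$.

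\textbf{Step 2 (concentration).} This is the main technical step.

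The summands of $S_0$ lie in $[0,1]$. Their variance is at most $\mathbb E[w^2]\lesssim p(x)(2\pi t)^{d/2}$, i.e.\ of order $\mu/(m-1)$. Bernstein's inequality then gives $|S_0-\mathbb ES_0|\le C\sqrt{\mu\log(2/\delta)}+C\log(2/\delta)$.

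For $S_1$, use the normalized summands $\|x-x_j\|^2w_j/(2t)$. Since $se^{-s/4}\le 4/e$, these are bounded by a constant, and their variance is again $O(\mu/(m-1))$. Bernstein applies in the same way.

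The condition $\mu\ge C_0\log(2/\delta)$ ensures the linear Bernstein term is dominated by the square-root term. It also gives $S_0\ge\mu/2$ with high probability. A union bound over the two events turns $\delta$ into $\delta/2$, which only changes constants.

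\textbf{Step 3 (ratio).} Set $S_0=\mathbb ES_0(1+\epsilon_0)$ and $S_1/(2t)=\mathbb E[S_1/2t](1+\epsilon_1)$ with $|\epsilon_i|\lesssim\sqrt{\log(2/\delta)/\mu}\le 1/2$. Then
\[
d_{\mathrm{heat}}=\frac{\mathbb E S_1/(2t)}{\mathbb ES_0}\cdot\frac{1+\epsilon_1}{1+\epsilon_0}\cdot\frac{1}{1+1/S_0}.
\]
The three factors produce the three error terms:
\begin{itemize}
\item the first factor is $d+O(t(\tau^{-1}+L)^2)$, the geometry/density bias;
\item the second is $1+O(\sqrt{\log(2/\delta)/\mu})$, the sampling fluctuation;
\item the third is $1-O(1/\mu)$, the self-point correction.
\end{itemize}
Multiplying out and using $d\le n$ gives the stated bound.

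The main obstacle is the careful second-order expansion in Step 1. One must check that the curvature correction to $\|x-y\|^2$ and the volume-density correction enter $S_0$ and $S_1$ only at relative order $t/\tau^2$, and that the cross term between $\nabla p$ and the $O(|v|^3)$ curvature terms is $O(t(\tau^{-1}+L)^2)$ and not larger. A secondary point is that the reach bound by itself does not give a $C^3$ bound on $M$. The expansion of $\|x-y\|^2$ needs the third-order Taylor term to integrate to zero by symmetry, with the remainder controlled by the $C^3$ assumption. If a quantitative $C^3$ bound is not available, the constants may have to absorb it.
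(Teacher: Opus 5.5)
Your proposal is correct and follows the same route as the paper's proof. Both arguments use the self-point split $d_{\mathrm{heat}}=S_1/(2t(1+S_0))$, a local-coordinate Gaussian computation of $\mathbb E S_0,\mathbb E S_1$ with relative bias $O(t(\tau^{-1}+L)^2)$, Bernstein concentration, propagation through the ratio, and the $S_1/(2tS_0(1+S_0))\lesssim 1/\mu$ self-point correction. You also supply the per-summand variance bound $\lesssim \mu/(m-1)$ that Bernstein needs to give deviations of order $\sqrt{\mu\log(2/\delta)}$ rather than $\sqrt{m\log(2/\delta)}$, which the paper leaves implicit.

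Your closing worry about needing a $C^3$ bound is unfounded. The reach alone gives $\tau\sin(|v|/\tau)\le\|x-\exp_x v\|\le|v|$. Via the Gauss equation and Bishop--G\"unther comparison it also gives $\theta(v)=1+O(|v|^2/\tau^2)$. So the constants genuinely depend only on $d,n,p_{\max}/p_{\min}$, as the theorem requires.

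You should, however, localize at a scale-invariant radius such as $c'\tau$ or $C\sqrt{t\log(\tau^2/t)}$, not $C\sqrt{t\log(1/t)}$. Then control the far field using $p\le p_{\max}$ together with the reach-controlled volume growth of $M$ in Euclidean balls.
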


The theorem identifies the scale window
\quad $    \left(
        \dfrac{\log(2/\delta)}{m\,p(x)}
    \right)^{2/d}
    \lesssim
    t
    \ll
    (\tau^{-1}+L)^{-2},$ \quad 
or, more intuitively,
\quad $   \text{sampling scale}
    \;\lesssim\;
    \sqrt t
    \;\ll\;
    \text{geometric variation scale}.$
This also motivates the practical choice
$t_{\min}\propto\bar r_{\mathrm{nn}}^2$ used in HFS.

\begin{proof}
Write
\[
    N_0=\sum_{j=2}^{m}e^{-\|x-x_j\|^2/4t},
    \qquad
    S=\sum_{j=2}^{m}
        \|x-x_j\|^2e^{-\|x-x_j\|^2/4t}.
\]
Since the self-point contributes one to the denominator and zero to the
numerator,
\[
    d_{\mathrm{heat}}(x,t)=\frac{S}{2t(1+N_0)}.
\]

For $\sqrt t$ small relative to the reach and density-variation scale,
standard local-coordinate estimates approximate $M$ by
$T_xM\simeq\mathbb R^d$. Using
\[
    \int_{\mathbb R^d}e^{-\|u\|^2/4t}\,du=(4\pi t)^{d/2},
    \qquad
    \int_{\mathbb R^d}
        \|u\|^2e^{-\|u\|^2/4t}\,du
        =2dt(4\pi t)^{d/2},
\]
gives
\begin{align*}
    \mathbb E[N_0]
    &=
    \mu(x,t)
    \left[
        1+O\!\left(t(\tau^{-1}+L)^2\right)
    \right],
    \\
    \mathbb E[S]
    &=
    2dt\,\mu(x,t)
    \left[
        1+O\!\left(t(\tau^{-1}+L)^2\right)
    \right].
\end{align*}
Hence
\[
    \frac{\mathbb E[S]}
         {2t\,\mathbb E[N_0]}
    =
    d+O\!\left(t(\tau^{-1}+L)^2\right).
\]

The summands of $N_0$ are bounded by $1$, while
$r^2e^{-r^2/4t}\le4t/e$. Bernstein's inequality therefore yields,
simultaneously with probability at least $1-\delta$,
\[
    |N_0-\mathbb E N_0|
    \le
    C\sqrt{\mu(x,t)\log(2/\delta)},
\]
and
\[
    |S-\mathbb E S|
    \le
    Ct\sqrt{\mu(x,t)\log(2/\delta)},
\]
after absorbing lower-order logarithmic terms under
$\mu(x,t)\ge C_0\log(2/\delta)$. Propagating these deviations through
the ratio gives
\[
    \left|
        \frac{S}{2tN_0}
        -
        \frac{\mathbb E S}{2t\,\mathbb E N_0}
    \right|
    \le
    C\sqrt{\frac{\log(2/\delta)}{\mu(x,t)}}.
\]
Finally,
\[
    \left|
        \frac{S}{2t(1+N_0)}
        -
        \frac{S}{2tN_0}
    \right|
    =
    \frac{S}{2tN_0(1+N_0)}
    \le
    \frac{C}{\mu(x,t)},
\]
which gives the stated bound.
\end{proof}

The theorem concerns smooth manifold regions. Junctions, boundaries, and
mixed-dimensional interfaces need not satisfy its assumptions; for HFS,
such departures are useful signals rather than failure cases.

\paragraph{Log-Hessian spectrum.}
\label{app:local_derivatives}
\label{app:hessian_intuition}
Heat dimension counts effective directions but does not describe their
orientation. For this we use
\[
    Q_X(x_i,t)
    =
    \nabla_y^2\log u_X(y,t)\big|_{y=x_i}.
\]
If
\[
    \bar r_i
    =
    \frac{1}{W_i(t)}
    \sum_jw_{ij}(t)(x_i-x_j),
\]
then the closed form is
\[
    Q_X(x_i,t)
    =
    \frac{1}{4t^2}
    \left[
        \frac{1}{W_i(t)}
        \sum_jw_{ij}(t)(x_i-x_j)(x_i-x_j)^\top
        -
        \bar r_i\bar r_i^\top
    \right]
    -
    \frac{I}{2t}.
    \label{eq:app_log_hessian_closed}
\]
Thus the log-Hessian is simply a scale-normalized local weighted covariance
minus the ambient isotropic heat term.

Let
\[
    \lambda_1(x_i,t)\ge\cdots\ge\lambda_n(x_i,t)
\]
be its eigenvalues. For an ideal locally flat $d$-dimensional structure,
\[
    2t\lambda_k
    \approx
    \begin{cases}
        0, & \text{tangent directions},\\
        -1, & \text{normal directions}.
    \end{cases}
\]
In $\mathbb R^3$ this gives
\[
\begin{array}{c|c}
\text{local type}
&
(2t\lambda_1,2t\lambda_2,2t\lambda_3)
\\ \hline
\text{filament}
&
(0,-1,-1)
\\
\text{surface}
&
(0,0,-1)
\\
\text{volume}
&
(0,0,0).
\end{array}
\]
Departures from these ideal patterns encode curvature, anisotropy,
branching, boundaries, and other nonhomogeneous structure.

\paragraph{Scale-transition rate.}
\label{app:tau_intuition}
Finally,
\[
    \tau(x_i,t)
    =
    t\,\partial_t d_{\mathrm{heat}}(x_i,t)
    =
    \partial_{\log t}d_{\mathrm{heat}}(x_i,t)
\]
measures how quickly the visible local dimension changes with scale (computed
  by finite differences in $\log t$ across the $T$ scales).
Homogeneous interior regions have $\tau\approx0$. A point on a thin branch
attached to a surface, for example, can transition from
$d_{\mathrm{heat}}\approx1$ to $d_{\mathrm{heat}}\approx2$ as the heat
radius grows, producing large $|\tau|$. Junctions and boundaries similarly
generate transient responses.

The three local quantities therefore have distinct roles:
\[
\boxed{
\begin{aligned}
d_{\mathrm{heat}}
    &:\ \text{how many geometric directions?}\\
2t\lambda_k
    &:\ \text{what tangent--normal structure?}\\
\tau
    &:\ \text{how does the geometry change with scale?}
\end{aligned}}
\]
\label{app:local_work_together}
\label{app:local_summary}

\paragraph{Density-sensitive versus geometry-sensitive channels.}
\label{app:density_geometry_channels}
Raw heat values depend directly on local sampling density. In contrast,
if locally
\[
    u_X(x,t)
    \approx
    \rho(x)t^{-(n-d)/2}\times\mathrm{const},
\]
then
\[
    \log u_X(x,t)
    \approx
    \log\rho(x)
    -
    \frac{n-d}{2}\log t
    +
    \mathrm{const}.
\]
A slowly varying multiplicative density factor therefore largely cancels
from derivatives of $\log u_X$. This motivates separating
\[
    \phi^\rho(x_i)
    =
    [u_X(x_i,t_a)]_{a=1}^T
\]
from
\[
    \phi^g(x_i)
    =
    [
        d_{\mathrm{heat}}(x_i,t_a),
        \tau(x_i,t_a),
        2t_a\lambda_1(x_i,t_a),
        \ldots,
        2t_a\lambda_n(x_i,t_a)
    ]_{a=1}^T.
\]
The former deliberately retains density information; the latter emphasizes
support geometry, although strong density gradients at the same scale as
the kernel can still affect the geometric channels.

\subsection{Heat Dimension Spectrum}
\label{app:hds_details}
\label{app:dimension_stats}

The local descriptor is pointwise, whereas classification requires a
shape-level summary. HDS records the distribution of local heat dimensions
across the cloud and across scale.

Let $c_1,\ldots,c_K$ be dimension centers; in $\mathbb R^3$ we use
$c_k\in\{0,1,2,3\}$. Define
\[
    \eta_k(d)
    =
    \frac{
        \exp\!\left(-(d-c_k)^2/(2\sigma^2)\right)
    }{
        \sum_{\ell=1}^K
        \exp\!\left(-(d-c_\ell)^2/(2\sigma^2)\right)
    },
    \qquad \sigma=0.5.
\]
The uniform spectrum and heat-weighted spectrum are
\[
    p_k^{\mathrm{geo}}(t)
    =
    \frac1m
    \sum_i
    \eta_k(d_{\mathrm{heat}}(x_i,t)),
\]
and
\[
    p_k^\rho(t)
    =
    \frac{
        \sum_i
        u_X(x_i,t)
        \eta_k(d_{\mathrm{heat}}(x_i,t))
    }{
        \sum_i u_X(x_i,t)
    }.
\]
Their difference
\[
    \Delta p_k(t)
    =
    p_k^\rho(t)-p_k^{\mathrm{geo}}(t)
\]
measures whether heat-concentrated regions have a different dimensional
composition from the cloud as a whole. The final descriptor is
\[
    \Phi_{\mathrm{HDS}}(X)
    =
    \left[
        p_k^{\mathrm{geo}}(t_a),
        \Delta p_k(t_a)
    \right]_
    {k=1,\ldots,K;\;a=1,\ldots,T}.
\]

HDS retains information that mean dimension alone loses: for example,
a half-filament/half-volume cloud and a purely surface-like cloud can both
have mean dimension $2$, while their spectra are clearly different.
For $K=4$ and $T=8$, HDS has only $64$ entries and costs
$O(mKT)$ once the heat quantities are available.

\subsection{Scale Choice and Computation}
\label{app:scale_choices}
\label{app:large_scale_computation}

We use $T$ logarithmically spaced scales between
\[
    t_{\min}
    =
    c_{\min}\bar r_{\mathrm{nn}}^2,
    \qquad
    t_{\max}
    =
    c_{\max}\operatorname{diam}(X)^2.
\]
The lower endpoint tracks sampling resolution, consistent with
Theorem~\ref{thm:heat-dim}, while the upper endpoint captures coarse
shape organization. For strongly nonuniform clouds we additionally consider
the adaptive alternative
\[
    t_{i,a}=c_a r_i(k)^2,
\]
where $r_i(k)$ is the $k$-th-neighbor distance.

Dense pairwise computation costs $O(m^2T)$ for fixed ambient dimension.
Because Gaussian contributions beyond radius $C\sqrt t$ decay
exponentially, a truncated implementation reduces this to $O(mkT)$ when
only $k$ local neighbors contribute appreciably.

\begin{table}[t]
\centering
\caption{\footnotesize
\textbf{Geometric meaning of the HFS components.}
All quantities arise from the same multiscale heat field but probe
complementary aspects of point-cloud geometry.}
\label{tab:app_hfs_summary}
\setlength{\tabcolsep}{4pt}
\renewcommand{\arraystretch}{1.15}
\resizebox{.8\linewidth}{!}{%
\begin{tabular}{lll}
\toprule
Component & Level & Main geometric information \\
\midrule
$E_2(t)$
& Global
& Heat concentration / pairwise overlap \\
$C_2(t)$
& Global
& Scale response and codimension-sensitive behavior \\
$\mathcal E(t)$
& Global
& Roughness of the heat landscape \\
$d_{\mathrm{heat}}(x,t)$
& Local
& Effective intrinsic dimension \\
$2t\lambda_k(x,t)$
& Local
& Tangent--normal structure and anisotropy \\
$\tau(x,t)$
& Local
& Boundaries, junctions, and scale transitions \\
HDS
& Shape
& Distribution of dimensional types and density--geometry interaction \\
\bottomrule
\end{tabular}}
\end{table}

\section{Dataset Details}
\label{app:datasets}

Table~\ref{tab:datasets} summarizes the seven main benchmarks. The
synthetic datasets probe density, scale, and topology in $\mathbb R^2$,
while the real-world datasets span neuronal, subcellular, tree, and protein
morphology in $\mathbb R^3$.

\begin{table}[t]
\centering
\caption{\footnotesize
\textbf{Benchmark summary.} Synthetic benchmarks report overall accuracy (OA) over repeated train/test splits;
real-world benchmarks use stratified 5-fold CV and report mACC
(mean class accuracy).}
\label{tab:datasets}
\setlength{\tabcolsep}{5pt}
\renewcommand{\arraystretch}{1.12}
\resizebox{\linewidth}{!}{%
\begin{tabular}{lllcrrcl}
\toprule
\textbf{Type} & \textbf{Dataset} & \textbf{Domain} & \textbf{Classes} &
\textbf{Clouds} & \textbf{Points} & \textbf{Metric} & \textbf{Evaluation} \\
\midrule

\multirow{3}{*}{\textbf{Synthetic}}
& Processes
& Spatial processes
& 4 & 4,000 & 105--321
& OA
& $20\times$ 80/20 \\

& Orbit5k
& Dynamical systems
& 5 & 5,000 & 1,000
& OA
& $20\times$ 70/30 \\

& DisksAnnuli
& Density + topology
& 5 & 5,000 & 874--2,717
& OA
& $20\times$ 80/20 \\

\midrule

\multirow{4}{*}{\textbf{Real-world}}
& NeuroMorpho
& Neurons
& 9 & 6,198 & 1,024
& mACC
& 5-fold CV \\

& Allen
& Subcellular structures
& 7 & 8,400 & 1,024
& mACC
& 5-fold CV \\

& FOR-species20K
& LiDAR trees
& 15 & 3,435 & 1,024
& mACC
& 5-fold CV \\

& SCOP
& Protein folds
& 15 & 1,455 & 128
& mACC
& 5-fold CV \\

\bottomrule
\end{tabular}%
}
\end{table}
\subsection{Synthetic Benchmarks}
\label{app:datasets_synthetic}

\paragraph{Random Point Processes (Processes).}
We follow~\citet{kerber2024graphcode} and classify Poisson, Mat\'ern,
  Strauss, and Baddeley--Silverman processes, representing random,
  clustered, repulsive, and multiscale spatial interactions. Each class contains 1,000 clouds in $[0,1]^2$, generated by our
  re-implementation of the four processes with the parameters of
  \citet{kerber2024graphcode}. Because the processes are random, the
  point count varies between 105 and 321 (mean 194). We report mean
  accuracy over 20 random 80/20 splits.

\paragraph{Orbit5k.}
Orbit5k~\citep{carriere2020multiparameter} contains five classes generated
by the standard dynamical system with
$r\in\{2.5,3.5,4.0,4.1,4.3\}$, with 1,000 clouds of 1,000 points per
class. The hardest classes have similar coarse geometry but differ in
fine-scale density structure. We report mean accuracy over 20 random
70/30 splits following prior work.

\paragraph{Disks and Annuli (DisksAnnuli).}
Following the GraphCode benchmark of~\citet{kerber2024graphcode}, each
point cloud contains dense disks or annuli embedded in uniform background
noise, with object density $2.4$--$4.8\times$ the background density. The benchmark
jointly tests density-based localization and disk-versus-annulus topology.
Point counts are not fixed: as in the original construction, they depend on
the sampled areas and densities of the generated shapes and background,
yielding 874--2,717 points per cloud. We report mean accuracy over 20 random
80/20 splits.

\subsection{Real-World Morphology Benchmarks}
\label{app:datasets_realworld}

\paragraph{NeuroMorpho.}
This is a balanced nine-class benchmark from complete 3D neuronal
reconstructions in NeuroMorpho.Org~\citep{ascoli2007neuromorpho,
ascoli2018open}: basket, bipolar, ganglion, granule, medium spiny, mitral,
Purkinje, pyramidal, and stellate neurons. The resulting dataset contains
6,198 clouds. Each reconstruction is converted to a point cloud by
  sampling $m=1{,}024$ points uniformly by arc length along its segments;
  coordinates are kept in their native micron units. We use stratified
  5-fold cross-validation.

\paragraph{Allen.}
We use the \texttt{other\_punctate} subset of the 3D subcellular
point-cloud benchmark of~\citet{vasan2025interpretable}, derived from the
hiPSC imaging collection of~\citet{viana2023integrated}. The seven classes
correspond to CETN2, HIST1H2BJ, NUP153, RAB5A, SLC25A17, SMC1A, and SON.
Restricting to interphase cells and balancing the classes gives 8,400
clouds. Each is randomly subsampled to $m=1{,}024$ points. The task primarily distinguishes multiscale spatial
distributions and clustering of intracellular structures. We use
stratified 5-fold cross-validation.

\paragraph{FOR-species20K.}
FOR-species20K~\citep{puliti2025forspecies20k} contains terrestrial
laser-scanning point clouds of individual trees. We use the 15 most
populated species and balance them to 229 trees each, yielding 3,435
clouds. Each tree is randomly subsampled to $m=1{,}024$ points. The
  strongly nonuniform LiDAR density caused
by occlusion and viewing geometry provides a useful robustness test.
We use stratified 5-fold cross-validation.

\paragraph{SCOP.}
We use the SCOP structural-classification task from
ProteinShake~\citep{proteinshake}. Following standard residue-level
point-cloud representations of protein structure, each cloud contains only the
$C_\alpha$ backbone coordinates; no sequence, residue identity, side-chain, or
chemical features are provided. We select the 15 most populated folds and
balance them to 97 proteins each, yielding 1,455 clouds.

Protein structures have variable native resolution, with approximately one
$C_\alpha$ point per residue. We initially used a point budget of
$m=1{,}024$, but upsampling shorter proteins created coincident points. These
zero-distance duplicates caused the nearest-neighbor scale
$r_{\mathrm{nn}}$, and hence $t_{\min}$, to collapse toward zero, corrupting
the finest-scale heat features. We therefore use $m=128$, subsampling longer
proteins as needed and avoiding artificial density structure from duplicated
residues. Clouds are centered but not orientation-normalized, making SCOP a
natural test of rotation behavior. We use stratified 5-fold cross-validation.
Because all methods receive coordinates only, these results should not be
compared directly with protein models that additionally use sequence, residue
identity, or other chemical information.

\section{Experimental Details}
\label{app:experimental_details}

\subsection{Implementation Details}
\label{app:implementation}

\paragraph{Heat field signatures.}
HFS-simple, HFS-desc and HFS-full share one scale schedule: $T=8$ diffusion times spaced
logarithmically between $c_{\min}\,r_{\mathrm{nn}}^2$ and $c_{\max}\,\mathrm{diam}^2$
with $c_{\min}=0.05$, $c_{\max}=0.25$ (Appendix~\ref{app:scale_choices}); no
per-dataset tuning is performed. Here $r_{\mathrm{nn}}$ is the mean
nearest-neighbour distance, computed over \emph{distinct} points: a point cloud
may contain coincident samples (for instance when a short protein is resampled to
a fixed budget), and a duplicate lying on top of its own copy is not a neighbour.
Measuring $r_{\mathrm{nn}}$ over distinct points keeps it a proxy for the sampling
resolution and keeps the schedule well defined for such clouds. This yields
  $6T=48$ features per point in $\mathbb{R}^3$. Each local channel ($u_X$,
  $d_{\mathrm{heat}}$, $\tau$, and the three eigenvalues $2t\lambda_k$) is
  pooled over points at every scale by its mean, standard deviation and
  10th/50th/90th percentiles ($5T=40$ values per channel); concatenating the
  global ($32$), HDS ($64$) and pooled-local ($240$) blocks gives the
  $336$-dimensional HFS-desc descriptor. \textbf{HFS-simple} omits the log-Hessian block
and is $216$-dimensional. \textbf{HFS-full} keeps the same per-point features
and replaces the fixed statistical pooling with a learned shared MLP of
  widths $(64,128,256)$ with batch normalization, mean and max pooling, and a
  $256$-unit head with dropout $0.5$ ($\sim\!0.2$M parameters), trained for $250$
epochs with Adam at learning rate $10^{-3}$ under a cosine schedule, batch size
$32$; its three input blocks are standardized per feature using training-split
  statistics. \textbf{HFS+DGCNN} concatenates the standardized $48$ per-point HFS
  features with $xyz$ and feeds the result to a DGCNN whose first $k$-NN graph is
  built on $xyz$ only and which is otherwise unmodified.

\paragraph{Classical descriptors.}
HKS is computed on a symmetric $10$-NN graph Laplacian ($80$ eigenpairs,
    $16$ log-spaced times), and FPFH with Open3D on unit-ball-normalized clouds
    (normal radius $0.05$, feature radius $0.10$); both are pooled over points
    into a fixed-length cloud descriptor. On SCOP, where $m=128$ makes these
    radii enclose no neighbours for roughly a third of the clouds, FPFH falls
    back to $30$- and $100$-nearest-neighbour neighbourhoods for those clouds.

\paragraph{Classifier for descriptor rows.}
Every closed-form descriptor row, FPFH, HKS, HFS-simple, and HFS-desc, is
standardized and classified by the \emph{same} two-layer MLP with hidden widths
$(128,64)$, trained to convergence (at most $3000$ iterations; $1000$ for the
  synthetic HFS rows and the scale ablation). Holding the
classifier fixed across these rows is deliberate: it makes the comparison a
comparison of \emph{representations}, not of downstream models.

\paragraph{Deep baselines.}
PointNet, PointNet++, DGCNN, and Point Transformer use their authors' published
optimizers, schedules, and epoch budgets (e.g. DGCNN: SGD with momentum $0.9$,
weight decay $10^{-4}$, cosine schedule, label smoothing $0.2$, $250$ epochs;
Point Transformer: SGD, multistep schedule, $200$ epochs), with batch size $32$.
Training augmentation is the standard recipe of that literature: random rotation
about the $z$ axis, random scaling in $[0.8,1.25]$, random translation, and
per-point jitter. We deliberately do not add full $SO(3)$ augmentation to these
four, since doing so would depart from the published protocols that produce their
reported numbers; Section~\ref{sec:results} analyzes the consequence and adds
two baselines that address rotation directly. \textbf{DGCNN-SO(3)} is the same
DGCNN architecture under the same recipe, differing \emph{only} in that its
rotation augmentation is drawn from the full Haar measure on $SO(3)$ rather than
about the $z$ axis alone, so any difference is attributable to the augmentation.
\textbf{VN-DGCNN} replaces DGCNN's layers with Vector
Neurons~\citep{deng2021vector}, which carry lists of $3$-vectors instead of
scalars and are therefore rotation-equivariant by construction, with an invariant
readout; it uses the same optimizer, schedule and epoch budget as DGCNN. We
verified its invariance numerically rather than assuming it: rotating an input
changes its logits by $6\times 10^{-8}$ relative, i.e. float32 round-off.

\paragraph{Point budget.}
On the four real benchmarks every method receives the identical clouds, so the
comparison there isolates the representation exactly. Deep baselines center each
  cloud and scale it to the unit ball inside their data loaders, following their
  published recipes; the HFS features are computed on the raw coordinates, since
  the HFS scale schedule adapts to each cloud. On the synthetic benchmarks
the descriptor methods, which accept variable-size input, use the clouds at native
resolution while the deep baselines resample to a fixed budget ($m=256$ for
  Processes, $m=1{,}024$ for Orbit5k and DisksAnnuli), so the two families
see the same clouds at different resolutions. Clouds are resampled to a fixed
budget $m$ per dataset: $m=1{,}024$
for the subcellular, neuron and tree benchmarks, whose raw acquisitions contain far
more points than that (Allen images, for example, provide $20{,}480$), so the
budget is a genuine \emph{sub}sample. Proteins are the exception: a protein has a
native resolution of one point per residue (median $189$ in our subset), so a
budget of $1{,}024$ would not subsample but \emph{duplicate} real residues roughly
five-fold, inventing structure the molecule does not have and injecting spurious
density variation into a density-sensitive descriptor. We therefore use $m=128$ for
SCOP, at which the median protein is represented at native resolution with no
duplicated residue and long chains are subsampled. We note the consequence
honestly: PointNet++ is the one baseline whose accuracy falls at the lower budget,
because it alone builds its hierarchy by repeatedly subsampling the input
($m\!\to\!m/2\!\to\!m/8$ centroids), whereas the $k$-NN-based baselines and the
descriptor rows improve once the duplication is removed.

\paragraph{Protocol and hardware.}
Synthetic benchmarks use $20$ random stratified splits ($80/20$; Orbit5k $70/30$),
with the split seeds shared by every method; real benchmarks use stratified
$5$-fold cross-validation, with the fold assignment fixed once and shared by every
method, so all methods are trained and evaluated on exactly the same partitions.
Degenerate clouds, a small number of real scans are too sparse or nearly
collinear for the Hessian eigendecomposition, are detected per cloud and
imputed rather than dropped, so the sample count stays identical across methods;
imputation is applied per feature block, so a failure in one channel does not
discard the channels that were computed successfully.
All experiments run on one node with a single NVIDIA H200 GPU and $16$ CPU
workers; the timings in Table~\ref{tab:time} were collected on that same node for
every method, with the closed-form descriptors extracted on the CPU worker pool
(they use no GPU) and the deep networks trained end-to-end on the GPU.

\subsection{Boundary Datasets: Where HFS Is Neutral}
\label{app:boundary}

HFS is designed for labels carried by intrinsic multiscale geometry. It is worth
stating plainly where that assumption does \emph{not} hold, both because it
sharpens the claim and because the boundary is informative.

We report two such datasets. \textbf{IntrA}~\citep{yang2020intra} asks whether a
reconstructed vessel segment contains an intracranial aneurysm, a single
localized bulge on an otherwise smooth tube, so the label is a property of one
region of the surface rather than of the multiscale organization of the whole
cloud. \textbf{ATLAS-1}~\citep{devries2025pointmil} classifies drug treatment
from the reconstructed surface of a single cell, where the discriminative signal
is largely overall cell shape and size.

Table~\ref{tab:boundary} shows the expected pattern: coordinate-based networks are
at least as strong as HFS on both, and the margins that HFS enjoys on the
morphology benchmarks disappear. HFS remains competitive rather than
uncompetitive. On IntrA the best HFS variant is second by mACC and $2.8$ points
  behind DGCNN; on ATLAS-1 it is $1.7$ points behind Point Transformer. But
neither is a task where HFS is the right tool, and we do not claim it is.

\begin{wraptable}{r}{3in}
\vspace{-.2in}
\centering
\caption{\footnotesize \textbf{Boundary datasets} (mACC / macro-F1, \%). Both
labels are carried by rigid or localized shape rather than by intrinsic
multiscale geometry, and HFS is neutral there. IntrA uses the official split;
ATLAS-1 follows the protocol of~\citet{devries2025pointmil}; DGCNN and
  HFS+DGCNN are averaged over three seeds per fold. \textbf{Bold} =
best, \underline{underline} = second.}
\label{tab:boundary}
\begin{tabular}{l cc}
\toprule
\textbf{Method} & IntrA & ATLAS-1 \\
\midrule
FPFH         & 81.6 / 82.1 & 74.6 / 74.5 \\
HKS       & 62.6 / 64.1 & 50.3 / 50.3 \\
\midrule
PointNet    & 71.5 / 73.6 & 74.9 / 74.3 \\
PointNet++ & 89.0 / \underline{90.9} & 76.1 / 76.1 \\
DGCNN     & \textbf{92.6} / \textbf{91.9} & 76.3 / 76.2 \\
Point Tr     & 79.0 / 80.4 & \textbf{78.1} / \textbf{78.0} \\
\midrule
\rowcolor{green!8}
HFS-simple                         & 80.1 / 81.0 & 73.0 / 72.9 \\
\rowcolor{green!8}
HFS-desc                           & 85.1 / 86.0 & 74.4 / 74.3 \\
\rowcolor{green!8}
HFS-full                           & 89.2 / 89.0 & 74.4 / 74.4 \\
\rowcolor{green!8}
HFS+DGCNN                          & \underline{89.8} / 90.8 & \underline{76.4} / \underline{76.6} \\
\bottomrule
\end{tabular}
\vspace{-.3in}
\end{wraptable}
We read this as support for the paper's thesis rather than against it. A
representation that computes density, intrinsic dimension and scale transitions
should help exactly when those quantities carry the label, and should be neutral
when they do not; a method that won everywhere would be evidence that the
benchmarks, not the representation, were doing the work. The practical guidance
follows directly: use HFS where the class is intrinsic morphology, and a
coordinate encoder where it is rigid shape or localized appearance.

\subsection{Scale-Schedule Ablation}
\label{app:ablations}

The scale schedule has three constants: the number of diffusion times $T$ and the
range multipliers $c_{\min},c_{\max}$. Table~\ref{tab:scale_ablation} varies them
on one synthetic and one real dataset, holding everything else fixed.

Accuracy increases monotonically with $T$ and saturates: on NeuroMorpho
$58.8\rightarrow69.5\rightarrow72.3\rightarrow74.2$ for $T=2,4,8,16$, with the
step from $8$ to $16$ worth $1.9$ points for twice the descriptor length. This is
the expected signature of a genuinely multiscale representation, a single
scale is not enough, and the marginal value of finer sampling decays. We use
$T=8$ throughout as the accuracy/size compromise, not as a tuned optimum: $T=16$
would improve every reported HFS number.

\begin{wraptable}{r}{3in}
\vspace{-.2in}
\centering
\caption{\footnotesize \textbf{Scale-schedule ablation.} Top: number of scales
$T$ (range fixed at $c_{\min}=0.05$, $c_{\max}=0.25$). Bottom: range multipliers
at $T=8$. Processes: OA over 20 splits; NeuroMorpho: mACC over 5 folds. The
defaults used throughout are $T=8$, $(0.05,0.25)$ (shaded).}
\label{tab:scale_ablation}
\begin{tabular}{l r cc}
\toprule
\textbf{Setting} & dim & Processes & NeuroMorpho \\
\midrule
 $T=2$  &  84 & 95.0\std{0.8} & 58.8\std{1.5} \\
  $T=4$  & 168 & 97.7\std{0.5} & 69.5\std{1.6} \\
  \rowcolor{green!8}
  $T=8$  & 336 & 98.3\std{0.4} & 72.3\std{1.5} \\
  $T=16$ & 672 & 98.6\std{0.4} & 74.2\std{1.0} \\
  \midrule
  $(c_{\min},c_{\max})$ & & & \\
  $(0.01,0.25)$ & 336 & 98.6\std{0.3} & 72.2\std{0.8} \\
  $(0.05,0.10)$ & 336 & 98.3\std{0.3} & 72.4\std{1.5} \\
  \rowcolor{green!8}
  $(0.05,0.25)$ & 336 & 98.3\std{0.4} & 72.3\std{1.5} \\
  $(0.05,0.50)$ & 336 & 98.2\std{0.5} & 72.7\std{1.2} \\
  $(0.10,0.25)$ & 336 & 98.5\std{0.4} & 73.0\std{1.7} \\

\bottomrule
\end{tabular}
\vspace{-.3in}
\end{wraptable}
Sensitivity to the range is small. Across the five $(c_{\min},c_{\max})$ settings
accuracy spans $0.4$ points on Processes
  ($98.2$--$98.6$) and $0.8$ on NeuroMorpho ($72.2$--$73.0$), comparable to the fold-to-fold standard deviation.
The defaults $(0.05,0.25)$ are therefore not a tuned choice, and no per-dataset
tuning is performed anywhere in the paper.

\subsection{Rotation-Invariance Protocol}
\label{app:rotation}

\noindent \textbf{Protocol.} \quad
Each method is trained \emph{once}, on the unmodified training split, and then
evaluated twice: on the original test clouds, and on the \emph{same} test clouds
after each has been rotated by an independently drawn random rotation. No method is
  trained on rotated test clouds; the only rotations seen in training are the
  augmentation of each recipe (rotation about $z$ for every network except
  DGCNN-SO(3), which uses full $SO(3)$), and no model is retrained between the two
evaluations, so the reported quantity isolates the sensitivity of the learned or
computed representation to a change of frame. The reported $\Delta$ is the
balanced accuracy on the original test set minus the balanced accuracy on the
rotated test set, so $0$ means the rotation had no effect and positive values are
accuracy lost. Because a single fixed split is used for both evaluations, values
within roughly $\pm 2$ points are run-to-run noise; the small negative entries
in Table~\ref{tab:rotation} are gains of this size and should be read as zero.

\noindent \textbf{Sampling rotations.} \quad
Rotations are drawn uniformly from $SO(3)$ under the Haar measure. We sample a
matrix with i.i.d. standard normal entries, take its QR decomposition, correct
the column signs by the signs of the diagonal of $R$, and negate one column if
the determinant is $-1$; this yields a uniformly distributed element of $SO(3)$
rather than the non-uniform distribution obtained from uniformly sampled Euler
angles. Each test cloud receives its own independent rotation.

\noindent \textbf{Feature-level check.} \quad
Accuracy alone cannot distinguish a representation that is genuinely invariant
from one that merely happens to be robust on a particular dataset, so we also
measure invariance directly on the features. For each test cloud we compute the
relative deviation $\lVert \Phi(X) - \Phi(RX)\rVert_2 \,/\, \lVert
\Phi(X)\rVert_2$ between the descriptor of the original cloud and of its rotated
copy, and average over the test set. For all HFS variants this quantity is at
the level of floating-point round-off ($10^{-7}$ to $4\times10^{-6}$), confirming that the zero
entries in Table~\ref{tab:rotation} reflect exact invariance by construction
rather than an empirical coincidence. This is the distinction the
\emph{Invariance} column reports.

\noindent \textbf{Measuring canonical orientation.} \quad
The claim that SCOP lacks a canonical frame, while trees and cells have one, is
quantified as follows. For each cloud we take the principal axis $v(X)$, the unit
eigenvector of the covariance matrix with the largest eigenvalue, resolving its
sign ambiguity by fixing a hemisphere. If a dataset has a canonical orientation,
these axes concentrate in one direction and the norm of their mean,
$\lVert \frac{1}{N}\sum_i v(X_i) \rVert_2$, is close to $1$; if orientations are
arbitrary, the axes spread over the sphere and the mean norm approaches $0$. This
alignment score is $0.94$ for FOR-species, where gravity fixes the trunk axis,
and $0.44$ for SCOP, whose PDB reference frames are arbitrary, which is why
rotating the SCOP test set is nearly in-distribution for the baselines and costs
them little.

\section{Multiparameter Persistence for Point Clouds}
\label{app:mp}

This appendix gives a brief, self-contained account of multiparameter
persistence and of the density--distance bifiltration used by the topological
baselines in Section~\ref{sec:results}. We assume familiarity with homology but
not with persistence.

\subsection{One-parameter persistence}
\label{app:mp_single}

Persistent homology summarizes how the topology of a point cloud evolves across
a single scale. Given $X$, one builds a growing family of simplicial
complexes (a \emph{filtration}) indexed by a scale $r\ge 0$, most commonly
the Vietoris--Rips complex
\[
    \mathrm{Rips}_r(X)
    =
    \bigl\{\, \sigma \subseteq X :
    \|x_i - x_j\| \le r \ \text{for all } x_i,x_j \in \sigma \,\bigr\}.
\]
As $r$ increases, simplices are only added, so $\mathrm{Rips}_r(X)\subseteq
\mathrm{Rips}_{r'}(X)$ for $r\le r'$. Applying $k$-th homology yields a sequence
of vector spaces linked by inclusion-induced maps, a \emph{persistence
module}, whose structure is captured completely by a \emph{barcode}: a
multiset of intervals recording the scale at which each topological feature
(component, loop, void) is born and dies. Barcodes are stable to perturbations
of $X$~\citep{edelsbrunner2002topological} and can be vectorized for machine
learning.

\paragraph{Why one parameter is not enough.}
A single scale conflates density with geometry. In a cloud with nonuniform
sampling, the scale needed to connect a sparse filament already collapses a
dense cluster, and outliers create short-lived features indistinguishable from
genuine sparse structure. Distinguishing a dense annulus from background noise,
or a repulsive point process from a clustered one, requires reasoning about
topology \emph{and} density jointly, which a one-parameter filtration cannot
express.

\subsection{Multiparameter persistence}
\label{app:mp_multi}

Multiparameter persistence addresses this by filtering along two (or more)
parameters at once. One indexes complexes by a pair $(s,r)\in\mathbb{R}^2$ that
is monotone in each coordinate, obtaining a \emph{bifiltration}
$\{K_{s,r}\}$ with $K_{s,r}\subseteq K_{s',r'}$ whenever $s\le s'$ and $r\le
r'$. Homology now produces a \emph{two-parameter persistence module}: a grid of
vector spaces $M_{s,r}=H_k(K_{s,r})$ with commuting maps induced by the
inclusions~\citep{botnan2022introduction,coskunuzer2024topological}.

The essential difficulty is that, unlike the one-parameter case, two-parameter
modules admit \emph{no} complete discrete invariant analogous to the
barcode~\citep{carlsson2009theory}. There is no finite list of intervals that
recovers the module up to isomorphism. Practical pipelines therefore replace
the (intractable) full module with stable \emph{feature maps} that send the
module to a vector or graph usable by a classifier. The baselines in
Section~\ref{sec:results} are instances of this: multiparameter persistence
images and landscapes, GRIL, and Hilbert signed-measure convolutions each
vectorize the module for XGBoost, while graphcodes embed it as a graph for a
GNN~\citep{carriere2020multiparameter,vipond2020multiparameter,xin2023gril,loiseaux2023stable,kerber2024graphcode}.

\subsection{The density--distance bifiltration}
\label{app:mp_bifiltration}

For point clouds, the canonical choice of two parameters is \emph{distance}
(geometric scale) and \emph{density}. Fix a density estimate; a common choice
is a Gaussian kernel density estimator with bandwidth $\delta>0$,
\[
    \rho_\delta(x)
    =
    \frac{1}{m}\sum_{j=1}^{m}
    \exp\!\left(-\frac{\|x-x_j\|^2}{2\delta^2}\right),
\]
and let $\gamma(x)=-\rho_\delta(x)$ be the corresponding \emph{codensity}, so
that low codensity marks dense regions. The density--distance bifiltration is
then built in two nested steps. For a density threshold $s$ we first restrict to
the sufficiently dense subsample
\quad $    X_s = \{\, x\in X : \gamma(x)\le s \,\},$\quad
and then build the Rips complex at scale $r$ on that subsample,
\quad $    K_{s,r} = \mathrm{Rips}_r(X_s).$\quad 
Both operations are monotone, raising $s$ admits sparser points, raising $r$
adds simplices, so $\{K_{s,r}\}_{(s,r)}$ is a genuine bifiltration and its
$k$-th homology is a two-parameter module. Intuitively, the density axis peels
the cloud from its densest core outward, while the distance axis grows
connectivity at each density level; a topological feature is characterized by
the \emph{region} of the $(s,r)$-plane over which it persists, not by a single
birth--death pair. Related density-aware constructions replace the KDE
sublevel filtration with a distance-to-measure or multicover
filtration~\citep{edelsbrunner2021multi}, but the two-parameter density/scale
structure is the same.

\paragraph{Why this matches the benchmarks.}
This construction is exactly what the Processes and DisksAnnuli benchmarks are
designed to probe. On DisksAnnuli, an annulus contributes an $H_1$ loop that is
present only within the dense-object region (a band on the density axis) and
only across a range of scales (a band on the distance axis); a disk contributes
none, and uniform background noise contributes neither, so the classes
separate jointly in $(s,r)$ though not in either parameter alone. On Processes,
attractive, repulsive, and multiscale interactions imprint distinct joint
density--scale signatures. HFS targets this same density--geometry entanglement,
but through closed-form differential channels of the heat field
(Section~\ref{sec:method}) rather than by computing and vectorizing a
persistence module.

\end{document}